\def\eqref#1{equation~\ref{#1}}
\def\1{\bm{1}}
\def\vh{{\mathbf{h}}}
\def\vx{{\mathbf{x}}}
\def\vy{{\mathbf{y}}}
\def\mA{{\mathbf{A}}}
\def\mD{{\mathbf{D}}}
\def\mF{{\mathbf{F}}}
\def\mH{{\mathbf{H}}}
\def\mI{{\mathbf{I}}}
\def\mM{{\mathbf{M}}}
\def\mT{{\mathbf{T}}}
\def\mX{{\mathbf{X}}}
\def\mY{{\mathbf{Y}}}
\def\mZ{{\mathbf{Z}}}
\DeclareMathAlphabet{\mathsfit}{\encodingdefault}{\sfdefault}{m}{sl}
\SetMathAlphabet{\mathsfit}{bold}{\encodingdefault}{\sfdefault}{bx}{n}
\def\gB{{\mathcal{B}}}
\def\gD{{\mathcal{D}}}
\def\gE{{\mathcal{E}}}
\def\gG{{\mathcal{G}}}
\def\gI{{\mathcal{I}}}
\def\gL{{\mathcal{L}}}
\def\gN{{\mathcal{N}}}
\def\gS{{\mathcal{S}}}
\def\gV{{\mathcal{V}}}
\newcommand{\E}{\mathbb{E}}
\newcommand{\R}{\mathbb{R}}
\newcommand{\Cov}{\mathrm{Cov}}
\DeclareMathOperator*{\argmin}{arg\,min}
\theoremstyle{plain}
\newtheorem{theorem}{Theorem}[section]
\newtheorem{proposition}[theorem]{Proposition}
\newtheorem{lemma}[theorem]{Lemma}
\theoremstyle{definition}
\theoremstyle{remark}
\definecolor{Gray}{gray}{0.95}
\definecolor{darkgreen}{RGB}{0, 100, 0}
\definecolor{darkred}{RGB}{139, 0, 0}
\definecolor{First}{HTML}{BFC0FF} 
\definecolor{Second}{HTML}{E7E6FF} 
\begin{document}

\title{Training MLPs on Graphs without Supervision}


\author{Zehong Wang}
\affiliation{%
  \institution{University of Notre Dame}
  \city{Notre Dame}
  \state{Indiana}
  \country{USA}}
\email{zwang43@nd.edu}

\author{Zheyuan Zhang}
\affiliation{%
  \institution{University of Notre Dame}
  \city{Notre Dame}
  \state{Indiana}
  \country{USA}}
\email{zzhang42@nd.edu}

\author{Chuxu Zhang}
\affiliation{%
  \institution{University of Connecticut}
  \city{Storrs}
  \state{Connecticut}
  \country{USA}}
\email{chuxu.zhang@uconn.edu}

\author{Yanfang Ye}
\authornote{Corresponding Author.}
\affiliation{%
  \institution{University of Notre Dame}
  \city{Notre Dame}
  \state{Indiana}
  \country{USA}}
\email{yye7@nd.edu}

\begin{abstract}
  Graph Neural Networks (GNNs) have demonstrated their effectiveness in various graph learning tasks, yet their reliance on neighborhood aggregation during inference poses challenges for deployment in latency-sensitive applications, such as real-time financial fraud detection. To address this limitation, recent studies have proposed distilling knowledge from teacher GNNs into student Multi-Layer Perceptrons (MLPs) trained on node content, aiming to accelerate inference. However, these approaches often inadequately explore structural information when inferring unseen nodes. To this end, we introduce \textbf{SimMLP}, a \textbf{S}elf-superv\textbf{i}sed fra\textbf{m}ework for learning \textbf{MLP}s on graphs, designed to fully integrate rich structural information into MLPs. Notably, SimMLP is the first MLP-learning method that can achieve equivalence to GNNs in the optimal case. The key idea is to employ self-supervised learning to align the representations encoded by graph context-aware GNNs and neighborhood dependency-free MLPs, thereby fully integrating the structural information into MLPs. We provide a comprehensive theoretical analysis, demonstrating the equivalence between SimMLP and GNNs based on mutual information and inductive bias, highlighting SimMLP's advanced structural learning capabilities. Additionally, we conduct extensive experiments on 20 benchmark datasets, covering node classification, link prediction, and graph classification, to showcase SimMLP's superiority over state-of-the-art baselines, particularly in scenarios involving unseen nodes (e.g., inductive and cold-start node classification) where structural insights are crucial. Our codes are available at: \url{https://github.com/Zehong-Wang/SimMLP}.

\end{abstract}

\begin{CCSXML}
  <ccs2012>
  <concept>
  <concept_id>10002951.10003227.10003351</concept_id>
  <concept_desc>Information systems~Data mining</concept_desc>
  <concept_significance>500</concept_significance>
  </concept>
  <concept>
  <concept_id>10010147.10010257.10010258.10010260</concept_id>
  <concept_desc>Computing methodologies~Unsupervised learning</concept_desc>
  <concept_significance>500</concept_significance>
  </concept>
  <concept>
  <concept_id>10010147.10010257.10010293.10010294</concept_id>
  <concept_desc>Computing methodologies~Neural networks</concept_desc>
  <concept_significance>500</concept_significance>
  </concept>
  </ccs2012>
\end{CCSXML}

\ccsdesc[500]{Information systems~Data mining}
\ccsdesc[500]{Computing methodologies~Unsupervised learning}
\ccsdesc[500]{Computing methodologies~Neural networks}

\keywords{Graph Neural Networks; Inference Acceleration; Self-Supervision}


\maketitle

\section{Introduction}

\begin{figure}[!t]
  \centering
  \includegraphics[width=0.9\linewidth]{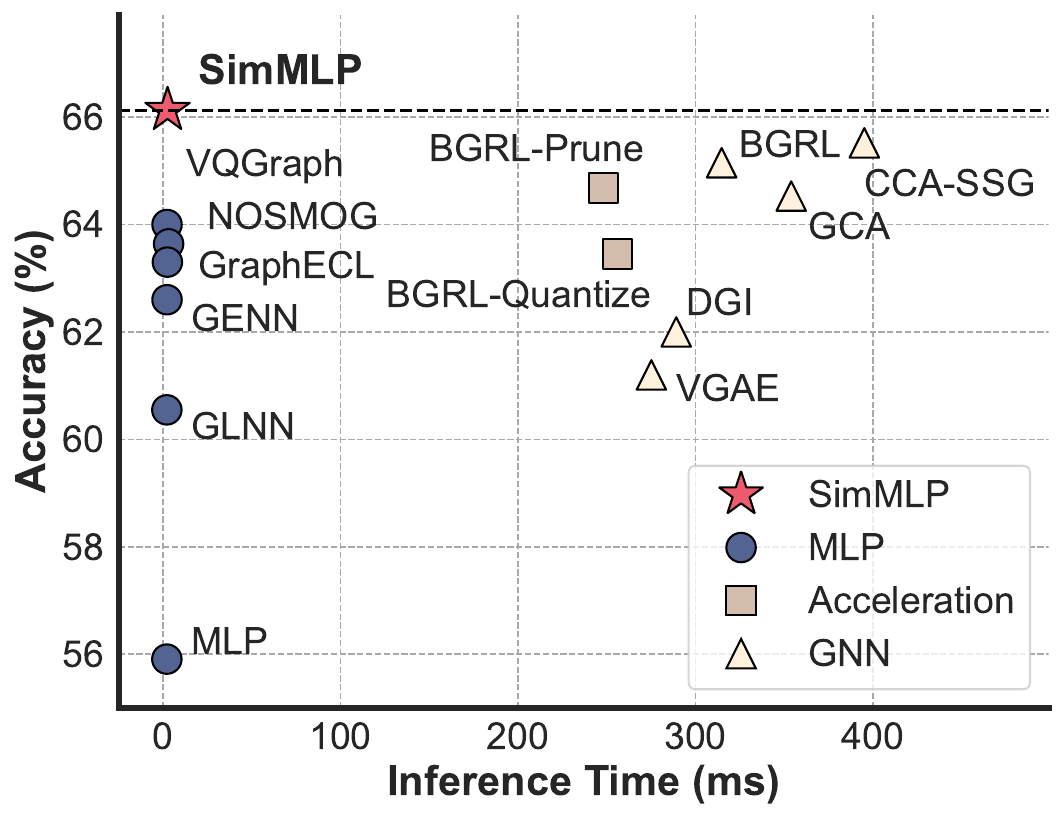}
  \vspace{-5pt}
  \caption{Accuracy vs. Inference Time on Arxiv dataset under cold-start setting. }
  \vspace{-5pt}
  \label{fig:time_acc}
\end{figure}

Given the widespread presence of graph-structured data, such as computing networks, e-commerce recommender systems, citation networks, and social networks, Graph Neural Networks (GNNs) have drawn significant attention in recent years. Typically, GNNs rely on message-passing mechanisms \citep{gilmer2017neural} to iteratively capture neighborhood information and learn representations of graphs. Despite their effectiveness across various graph learning tasks, GNNs face challenges in latency-sensitive applications, such as financial fraud detection \citep{wang2021apan}, due to the computational overhead associated with neighborhood fetching \citep{zhang2022graphless}. To enable faster inference in downstream tasks, existing approaches primarily employ techniques like quantization \citep{ding2021vq}, pruning \citep{zhou2021accelerating}, and knowledge distillation \citep{yan2020tinygnn} for accelerating graph inference. However, these methods are still constrained by the need to fetch neighborhoods, limiting their effectiveness in real-time scenarios.

To address this issue, Multi-Layer Perceptrons (MLPs), trained solely on node features without message passing, have emerged as efficient alternatives for latency-sensitive applications, offering up to two orders of magnitude acceleration ($100\times$) compared to GNNs \citep{zhang2022graphless,guo2023linkless,tian2022learning}. However, despite this significant speedup, the absence of message passing inevitably hinders the ability to capture structural information, resulting in degraded model performance. To mitigate this, \citet{zhang2022graphless} proposed distilling knowledge from pre-trained GNN teachers into MLP students by minimizing the KL divergence between the predictions of GNNs and MLPs. Although this approach has notably improved MLP performance, it has been observed that the model tends to mimic GNN predictions using MLPs rather than truly understanding the localized structural information of nodes, leading to sub-optimal performance.

To address this limitation, researchers intend to incorporate structural knowledge into MLPs. For instance, \citet{tian2022learning,wang2023graph} employ random walks on the original graph to learn positional embeddings for each node, which are then appended to the raw node features as complementary information. \citet{yang2023vqgraph} use vector quantization \citep{van2017neural} to learn a structural codebook during GNN pre-training and subsequently distill the knowledge from the codebook into downstream MLPs. Additionally, \citet{hu2021graph,xiao2024efficient} leverage contrastive learning to encode localized structures by pulling the target node closer to its neighbors in the embedding space. In summary, these methods utilize different heuristics--positional embeddings \citep{tian2022learning,wang2023graph}, structural codebooks \citep{yang2023vqgraph}, and neighborhood relationships \citep{hu2021graph,xiao2024efficient}--to provide structural information during MLP training. However, these heuristic approaches \textit{cannot fully replicate the functionality of GNNs in capturing the complete structural information of graphs}, which may lead to reduced generalization to unseen nodes.

To this end, we present a simple yet effective method, \textbf{SimMLP}: a \textbf{S}elf-superv\textbf{i}sed fra\textbf{m}ework for learning \textbf{MLP}s on graphs, which is the first MLP learning method equivalent to GNNs in the optimal case. SimMLP is based on the insight that modeling the fine-grained correlation between node features and graph structures can enhance the generalization of node embeddings \citep{tian2020makes}. Building on this insight, we employ a self-supervised loss to maximize the alignment between GNNs and MLPs in the embedding space, preserving intricate semantic knowledge. Theoretically, we demonstrate the equivalence between SimMLP and GNNs based on mutual information maximization and inductive biases. Furthermore, we interpret SimMLP through the lens of the information bottleneck theory to illustrate its generalization capability. This equivalence to GNNs offers three distinct advantages compared to existing methods:
(1) \textbf{Generalization}: SimMLP has the capability to fully comprehend the localized structures of nodes, making it well-suited for scenarios involving unseen nodes, such as inductive \citep{hamilton2017inductive} or cold-start settings \citep{zheng2022cold}. (2) \textbf{Robustness}: SimMLP demonstrates robustness to both feature and edge noise due to its advanced structural utilization. Additionally, the self-supervised nature mitigates the risk of overfitting to specific structural patterns, enhancing its resilience in situations with label scarcity. (3) \textbf{Versatility}: The self-supervised alignment in SimMLP enables the acquisition of task-agnostic knowledge, allowing the model to be applied across various graph-related tasks, whereas existing methods are tailored for a single task. To evaluate these benefits, we conducted experiments across 20 benchmark datasets, encompassing node classification on homophily and heterophily graphs, link prediction, and graph classification. Notably, SimMLP proves highly effective in inductive and cold-start settings with unseen nodes, as well as in link prediction tasks where localized structures are crucial. In terms of inference efficiency, SimMLP demonstrates significant acceleration compared to GNNs (90$\sim$126$\times$) and other acceleration techniques (5$\sim$90$\times$). Our contributions are summarized as follows:
\begin{itemize}
  \item We introduce SimMLP, a self-supervised MLP learning method for graphs that aims to maximize the alignment between node features and graph structures in the embedding space.
  \item SimMLP is the first MLP learning method that is equivalent to GNNs in the optimal case. We provide a comprehensive theoretical analysis to demonstrate SimMLP's generalization capabilities and its equivalence to GNNs.
  \item We conduct extensive experiments to showcase the superiority of SimMLP, particularly in scenarios where structural insights are essential.
\end{itemize}

\section{Related Work}

\noindent\textbf{Graph Neural Networks} \citep{kipf2017semisupervised,hamilton2017inductive,velickovic2018graph,gilmer2017neural,liu2023fair,liu2024can,wang2024tackling,zhang2024diet} encode node embeddings following message passing framework. Basic GNNs include GCN \citep{kipf2017semisupervised}, GraphSAGE \citep{hamilton2017inductive}, GAT \citep{velickovic2018graph}, and so forth. For repid inference, SGC \citep{wu2019simplifying} and APPNP \citep{gasteiger2018combining} decompose feature transformation and neighborhood aggregation, which are recently proven to be expressive \citep{han2023mlpinit,yang2023graph}. Despite their success, the neighborhood dependency significantly constrains the inference speed.

\noindent\textbf{Self-Supervised Learning} (SSL) \citep{chen2020simple,he2020momentum} acts as a pre-training strategy for learning discriminative \citep{tian2020makes} and generalizable \citep{huang2023towards} representations without supervision. Numerous studies extend SSL on graphs to train GNNs \citep{velickovic2018deep,hassani2020contrastive,zhu2020deep,zhu2021graph,thakoor2022largescale,hou2022graphmae,sun2023all,wang2023heterogeneous,wang2024gft,wang2024select}. In particular, GCA \citep{zhu2021graph} extends instance discrimination \citep{chen2020simple} to align similar instances in two graph views, BGRL \citep{thakoor2022largescale} employs bootstrapping \citep{grill2020bootstrap} to further enhance training efficiency, and GraphACL \citep{xiao2024simple} leverages an asymmetric contrastive loss to encode structural information on graphs. However, the dependency on neighborhood information still limits the inference speed.

\begin{figure*}[!ht]
  \centering
  \includegraphics[width=0.85\linewidth]{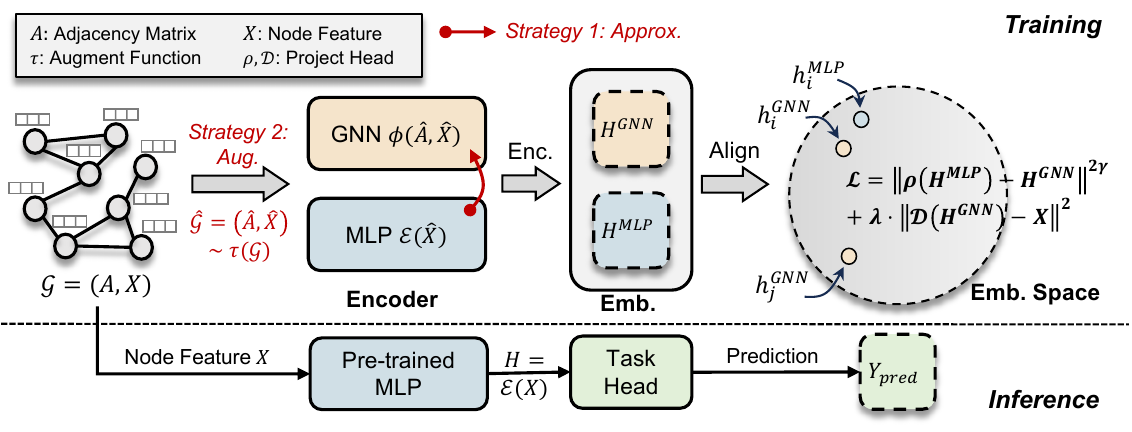}
  \caption{The overview of SimMLP. During pre-training, SimMLP uses GNN and MLP encoders to obtain node embeddings individually, and employs a self-supervised loss to maximize their alignment. To prevent the risk of trivial solutions, SimMLP further applies two strategies discussed in Section \ref{sec:prevent trivial solution}. During inference, SimMLP utilizes the pre-trained MLP to encode node embeddings, achieving significant acceleration by avoiding fetching neighborhood. }
  \label{fig:framework}
\end{figure*}

\noindent\textbf{Inference Acceleration on GNNs} encompasses quantization \citep{gupta2015deep,jacob2018quantization}, pruning \citep{han2015learning,frankle2018the}, and knowledge distillation (KD) \citep{hinton2015distilling,sun2019patient}. Quantization \citep{ding2021vq} approximates continuous data with limited discrete values, pruning \citep{zhou2021accelerating} involves dropping redundant neurons in the model, and KD focuses on transferring knowledge from large models to small models \citep{yan2020tinygnn}. However, they still need to fetch neighborhoods, resulting in constrained inference acceleration. Considering this, GLNN \citep{zhang2022graphless} utilizes structure-independent MLPs for predictions, significantly accelerating inference by eliminating message passing. However, the introduction of MLPs inevitably compromises the structural learning capability. Following works further integrate structural information into MLPs via positional embedding \citep{tian2022learning,wang2023graph}, label propagation \citep{yang2021extract}, neighborhood alignment \citep{hu2021graph,dong2022node,liu2022mlp,xiao2024efficient}, or motif cookbook \citep{yang2023vqgraph}, but these heuristic methods only consider one aspects of graph structures, failing to fully integrating structural knowledge. Unlike these methods, SimMLP is equivalent to GNNs in the optimal case, demonstrating better structural learning capabilities.

\section{Preliminary}

\noindent\textbf{Notations.} Considering a graph $\gG = (\mA, \mX)$ consisting of node set $\gV$ and edge set $E$, with $N$ nodes in total, we have node features $\mX \in \R^{N \times d}$ and a adjacent matrix $\mA \in \{0, 1\}^{N \times N}$, where $\mA_{ij} = 1$ iff $e_{i, j} \in E$, and $\mA_{ij}=0$ otherwise.
The GNN $\phi(\cdot, \cdot)$ takes node features $\vx_i$ and graph structure $\mA$ as input and outputs the structure-aware node embeddings $\vh_i^{GNN}$. The embedding follows a linear head to classify nodes into different classes, defined as:
\begin{equation}
  \vh_i^{GNN} = \phi(\vx_i, \mA), \quad \hat{\vy}_i^{GNN} = \text{head}^{GNN}(\vh_i^{GNN}).
\end{equation}
The GNNs highly rely on neighborhood information $\mA$, whereas neighborhood fetching poses considerable computational overhead during the inference. On the contrary, the MLP $\gE(\cdot)$ takes the node feature $\vx_i$ as input and output the node embeddings $\vh_i^{MLP}$, achieving fast inference by alleviating the neighborhood-fetching. The embeddings are then decoded via a prediction head:
\begin{equation}
  \vh_i^{MLP} = \gE(\vx_i), \quad \hat{\vy}_i^{MLP} = \text{head}^{MLP}(\vh_i^{MLP}).
\end{equation}
Although MLPs provide significantly faster inference over graph-structured datasets, the omitting of structural information inevitably sacrifices the model performance.

\noindent\textbf{Training MLPs on Graphs.} To jointly leverage the benefits of GNNs and MLPs, researchers propose methods to distill knowledge from pre-trained GNNs to MLPs by mimicking the predictions \citep{zhang2022graphless}. The training objective is defined as:
\begin{equation}
  \gL = \sum_{i \in \gV_{train}} \gL_{CE}(\hat{\vy}_i^{MLP}, \vy_i) + \lambda \sum_{i \in \gV} \gL_{KD}(\hat{\vy}_i^{MLP}, \hat{\vy}_i^{GNN}),
\end{equation}
where $\gL_{CE}$ is the cross-entropy between the prediction and ground-truth, and $\gL_{KD}$ optimizes the KL-divergence between predictions of teacher GNN and student MLP. During the inference, only the MLP is leveraged to encode node embeddings and make predictions, leading to a substantial inference acceleration. Despite this, the alignment in the label space maximizes the coarse-grained task-specific correlation between GNNs and MLPs, failing to capture the fine-grained and generalizable relationship between node features and graph structures \citep{Tian2020Contrastive}. To this end, SimMLP applies self-supervised learning to align GNNs and MLPs in a more intricate embedding space, better capturing structural information.

\section{Proposed SimMLP}

\subsection{Framework}

We present \textbf{SimMLP}: a \textbf{\textbf{S}}elf-superv\textbf{i}sed fra\textbf{m}ework for learning \textbf{MLP}s on graphs. The framework consists of three components: (1) \textit{GNN encoder}, (2) \textit{MLP encoder}, and (3) \textit{alignment loss}. As illustrated in Figure \ref{fig:framework}, SimMLP maximizes the alignment between GNN and MLP via a self-supervised loss. Specifically, given a graph $\gG = (\mA, \mX)$, we use GNN encoder $\phi(\cdot, \cdot)$ to extract structure-aware GNN embeddings $\vh_i^{GNN}$ and MLP encoder $\gE(\cdot)$ to obtain structure-free MLP embeddings $\vh_i^{MLP}$. The choice of GNN encoder is arbitrary; we can use different encoder for adopting to different tasks. For alignment, we employ the loss function to pretrain the model:
\begin{equation}
  \gL = \sum_{i \in \gV} \underbrace{   {\| \rho(\vh_i^{MLP}) - \vh_i^{GNN} \|}^{2\gamma}   }_{invariance}  + \lambda \underbrace{    \|\gD(\vh_i^{GNN}) - \vx_i \|^2     }_{reconstruction},
  \label{eq:objective}
\end{equation}
where $\gamma \ge 1$ serves as a scaling term, akin to an adaptive sample reweighing technique \citep{lin2017focal}, and $\lambda$ denotes the trade-off coefficient. The projector $\rho(\cdot), \gD(\cdot)$ can either be identity or learnable; we opt for a non-linear MLP to enhance the expressiveness in estimating instance distances \citep{chen2020simple}. The invariance term ensures the alignment between GNN and MLP embeddings \citep{grill2020bootstrap}, modeling the fine-grained and generalizable correlation between node features and localized graph structures. The reconstruction term acts as a regularizer to prevent the potential distribution shift \citep{batson2019noise2self}, providing better signals for training MLPs. In downstream tasks, we further train a task head for classification, as shown in Figure \ref{fig:framework}.

\begin{proposition}
  \label{thm:objective}
  Suppose $\gG = (\mA, \mX)$ is sampled from a latent graph $\gG_\gI = (\mA, \mF)$, $\gG \sim P(\gG_\gI)$, and $\mF^*$ is the lossless compression of $\mF$ that $\E[\mX | \mA, \mF^*] = \mF$. Let $\rho$ be an identity projector, and $\lambda = 1, \gamma=1$. The optimal MLP encoder $\gE^*$ satisfies
  \begin{align}
    \label{eq:thm objective}
    \gE^* =     & \argmin_{\gE} \: \E \left[ \left\| \mH^{MLP} - \mF^* \right\|^2 + \left\| \mH^{GNN} - \mF^* \right\|^2  \right]                         \\
    \nonumber + & \E \left[ \left\| \gD(\mH^{GNN}) - \mX \right\|^2  \right] - 2 \E_{\mF^*} \left[ \sum_i \Cov(\mH^{MLP}_i, \mH^{GNN}_i) | \mF^* \right].
  \end{align}
\end{proposition}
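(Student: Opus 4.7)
The plan is to establish Proposition~\ref{thm:objective} by an algebraic bias--variance decomposition of the invariance term around the latent summary $\mF^*$, and then identify the resulting cross term with the conditional covariance that appears on the right-hand side of \eqref{eq:thm objective}. Substituting $\rho = \mathrm{id}$, $\gamma = 1$, and $\lambda = 1$ collapses the pre-training loss in \eqref{eq:objective} to
\[ \gL(\gE) = \E\bigl[\, \|\mH^{MLP} - \mH^{GNN}\|^2 + \|\gD(\mH^{GNN}) - \mX\|^2 \,\bigr], \]
and the reconstruction summand already matches the corresponding term of \eqref{eq:thm objective} verbatim. All the work is therefore contained in rewriting the invariance summand.

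The key algebraic step I would apply is the $\pm \mF^*$ trick,
\[ \|\mH^{MLP} - \mH^{GNN}\|^2 = \|\mH^{MLP} - \mF^*\|^2 + \|\mH^{GNN} - \mF^*\|^2 - 2\,\langle \mH^{MLP} - \mF^*,\; \mH^{GNN} - \mF^* \rangle, \]
which after taking expectations immediately contributes the first two summands of \eqref{eq:thm objective}. For the inner-product cross term, the tower property conditioned on $\mF^*$ reduces the remaining claim to the coordinatewise identity
\[ \E\bigl[(H_i^{MLP} - F_i^*)(H_i^{GNN} - F_i^*) \bigm| \mF^*\bigr] = \Cov\bigl(H_i^{MLP},\, H_i^{GNN} \bigm| \mF^*\bigr), \]
which is valid as soon as both encoders are centered at $\mF^*$ under $P(\cdot \mid \mF^*)$.

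The hardest step, and the one I expect to require the most care, is justifying this centering condition $\E[\mH^{MLP} \mid \mF^*] = \mF^* = \E[\mH^{GNN} \mid \mF^*]$. I would argue it at the joint minimizer rather than universally: the reconstruction term drives $\mH^{GNN}$ (through $\gD$) to the Bayes-optimal predictor of $\mX$, and since the hypothesis $\mF^* = \E[\mX \mid \mA, \mF^*]$ identifies $\mF^*$ as the minimum-MSE summary measurable in $(\mA, \mF^*)$, the optimal GNN embedding has conditional mean $\mF^*$; the invariance term, being an MSE against $\mH^{GNN}$, is then minimized in $\gE$ by the conditional-mean predictor, which inherits the same mean $\mF^*$. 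To lift the pointwise decomposition to the equality of $\argmin$s in the proposition, I would verify that the residual between the original objective $\gL(\gE)$ and the right-hand side of \eqref{eq:thm objective} is $\gE$-free---since $\|\mH^{GNN} - \mF^*\|^2$, the reconstruction term, and the outer law of $\mF^*$ do not depend on the student---so that the two optimization problems share the same minimizer. The delicate part is controlling this residual when the centering holds only asymptotically at the optimum; I would handle it by treating the proposition as an equality of optimizers rather than of objective values and invoking the convexity of MSE in its predictor to propagate the pointwise identity to the $\argmin$.
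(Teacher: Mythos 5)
Your proposal follows essentially the same route as the paper's proof: insert $\pm\mF^*$ into the invariance term, expand the square so that $\|\mH^{MLP}-\mF^*\|^2$ and $\|\mH^{GNN}-\mF^*\|^2$ appear, and identify the remaining cross term with the conditional covariance given $\mF^*$ via the tower property. You are in fact more careful than the paper, which silently rewrites $\E\left[\langle \mH^{MLP}-\mF^*,\,\mH^{GNN}-\mF^*\rangle\right]$ as $\E_{\mF^*}\left[\sum_i \Cov(\mH^{MLP}_i,\mH^{GNN}_i)\,|\,\mF^*\right]$ without stating the centering condition $\E[\mH^{MLP}\mid\mF^*]=\E[\mH^{GNN}\mid\mF^*]=\mF^*$ that you correctly single out as the delicate step (just note that the stated hypothesis is $\E[\mX\mid\mA,\mF^*]=\mF$, not $\E[\mX\mid\mA,\mF^*]=\mF^*$ as written in your argument for that centering).
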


\begin{proof}
  All proofs in the paper are presented in Appendix \ref{sec:proof}.
\end{proof}

That is, the alignment loss implicitly enables the learned embeddings invariant to latent variables \citep{muthen2004latent,xie2022self}, and maximizes the consistency between GNNs and MLPs in covariance.

\subsection{Preventing Model Collapse}
\label{sec:prevent trivial solution}

\noindent\textbf{Challenges.} Training MLPs on graphs without supervision is a non-trivial task. As illustrated in Figure \ref{fig:model collapse}, naively applying the basic loss function (Equation \ref{eq:objective}) to align GNNs and MLPs results in model collapse, as evidenced by lower training loss and reduced accuracy. Consistent with our findings, \citet{xiao2024efficient} also show that simply employing the InfoNCE loss results in reduced performance.

\noindent\textbf{Causes.} We consider the issue derives from the heterogeneity in information sources, specifically node features and localized structures, encoded by MLPs and GNNs respectively. Before delving into the root causes, it is important to note that existing self-supervised methods primarily focus on aligning different aspects of a homogeneous information source (i.e., different views of the same graph). These methods typically use a single encoder to map various graph views into a unified embedding space, applying self-supervised loss to align the distances between views. This approach facilitates the learning of informative and generalizable embeddings. However, in scenarios involving heterogeneous information sources, each source often requires a distinct encoder, leading to projections into separate embedding spaces. Consequently, the self-supervised objective fails to accurately measure the true distance between sources, resulting in non-informative embeddings. We propose two strategies to address this issue.

\begin{figure}[!t]
  \centering
  \subfloat{\includegraphics[width=0.45\linewidth]{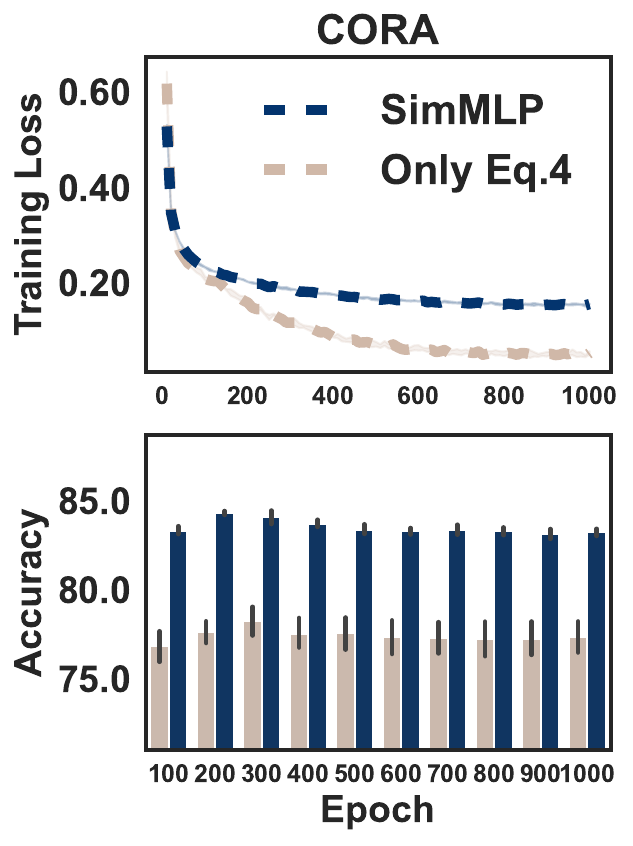}}
  \subfloat{\includegraphics[width=0.45\linewidth]{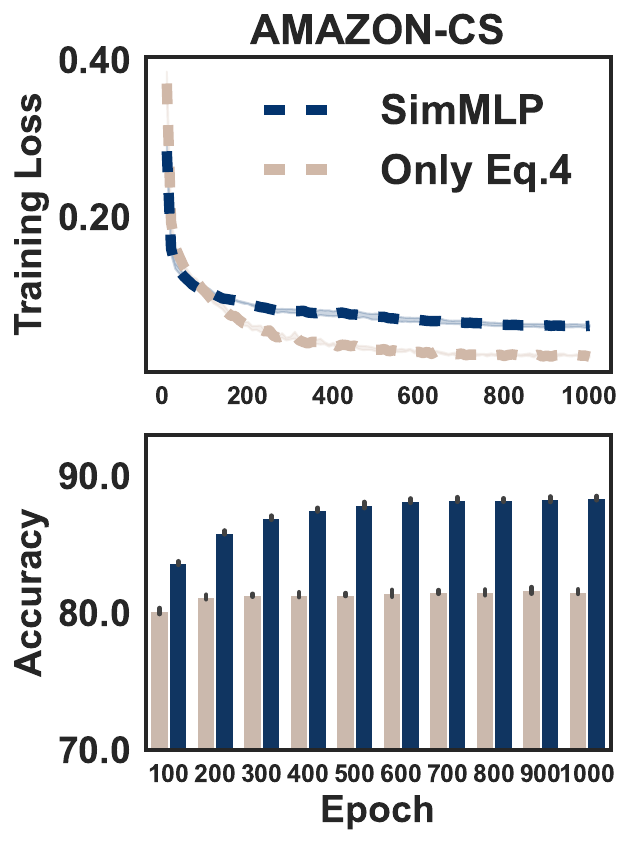}}
  \caption{Model collapse happens if naively applying the alignment loss (Equation \ref{eq:objective}). The strategies proposed in Sec. \ref{sec:prevent trivial solution} prevents the model collapse. }
  \label{fig:model collapse}
\end{figure}

\noindent\textbf{Strategy 1: Enhancing Consistency between GNN and MLP Encoders.} The challenge of handling heterogeneous information sources primarily stems from using different encoders. A straightforward solution is to use a single encoder to process all sources, including node features and localized structures. Fortunately, this approach is feasible for graphs. In the learning process of a GCN, neighborhood information is iteratively aggregated and updated, where aggregation can be viewed as a non-parametric averaging of neighborhoods, and updating as a parametric non-linear transformation. Thus, the learning process of GNNs can be approximated using MLPs by (1) applying an MLP encoder to node features to obtain MLP embeddings $\vh^M_i$, and (2) aggregating the MLP embeddings of neighboring nodes to approximate GNN embeddings $\vh^G_i$:
\begin{align}
  \text{Approx.: } & \vh_i^G = \sigma\left(\;\, \vh_i^M + \textstyle\sum_{(i,j) \in E} \alpha_{ij} \; \vh_j^M\right) ,
  \label{eq:approx}
\end{align}
where $\alpha_{ij}$ denotes the aggregation weight \citep{kipf2017semisupervised}, and $\sigma(\cdot)$ is the activation function. This approach allows the use of a single MLP encoder for both node features and localized structures, ensuring that $\vh^M_i$ and $\vh^G_i$ reside in the same embedding space. The form of this approximation is similar to SGC \citep{wu2019simplifying} and APPNP \citep{gasteiger2018combining}, which decompose feature transformation and message passing to reduce model complexity and redundant computations. However, SimMLP applies this strategy specifically to enhance the consistency between GNN and MLP encoders. Notably, this approximation can be adapted to various GNN architectures with simple modifications.

\noindent\textbf{Strategy 2: Enhancing Data Diversity through Augmentation.} Increasing data diversity \citep{lee2020self} is beneficial for handling heterogeneous information sources by creating multiple pairs of the same instances (i.e., target nodes and localized structures). We use augmentation techniques \citep{you2020graph,zhao2021data} to generate multiple views of nodes, allowing a single node to be associated with various pairs of node features and localized structures. The augmentation is defined as:
\begin{equation}
  \hat\gG = (\hat\mA, \hat\mX) \sim t(\gG), s.t., t(\gG) = \langle q_e(\mA), q_f(\mX) \rangle,
  \label{eq:aug}
\end{equation}
where $t(\cdot)$ represents the augmentation function, which includes structural augmentation $q_e(\cdot)$ and node feature augmentation $q_f(\cdot)$. For simplicity, we apply random edge masking and node feature masking with pre-defined augmentation ratios \citep{zhu2020deep}. The application of these two strategies prevents model collapse, as shown in Figure \ref{fig:model collapse}. It is important to note that these strategies are only employed during pre-training and do not affect the inference phase.

\subsection{Theoretical Understanding}
\label{sec:theory}

\begin{table}[!t]
  \centering
  \caption{The learning objective from the perspective of mutual information maximization. SimMLP is equivalent to GNNs in the optimal case, whileas other MLP methods lack the capability of (fully) leveraging localized structures.}
  \resizebox{\linewidth}{!}{
    \begin{tabular}{cll}
      \toprule
                                           & \textbf{Methods}                                                 & \textbf{Learning Objective}                                                                                                                                                                                               \\ \midrule
                                           & GNNs \citep{kipf2017semisupervised,hamilton2017inductive}        & \begin{minipage}{5cm} $\sum_{i \in \gV} I(\vy_i; \gS_i) = \sum_{i \in \gV} I(\vy_i; \mX^{[i]}) + \sum_{i \in \gV} I(\vy_i; \mA^{[i]} | \mX^{[i]})$\end{minipage} \\ \midrule
      \multirow{6}{*}{\rotatebox{90}{MLP}} & MLP \citep{zhang2022graphless}                                   & $\sum_{i \in \gV} I(\vx_i; \vy_i)$                                                                                                                                                                                        \\
                                           & GraphMLP \citep{hu2021graph}, GraphECL \citep{xiao2024efficient} & $\sum_{i \in \gV} I(\vy_i; \vx_i) + I(\vx_i; \mX^{[i]})$                                                                                                                                                                  \\
                                           & GLNN \citep{zhang2022graphless}                                  & $\sum_{i \in \gV} I(\vx_i; \vy_i | \gS_i) + I(\vx_i; \vy_i)$                                                                                                                                                              \\ \cmidrule{2-3}
                                           & NOSMOG \citep{tian2022learning}, GENN \citep{wang2023graph}      & $\sum_{i \in \gV} I(\vx_i; \vy_i | \gS_i) + I(\vx_i; \vy_i) + I(\mA; \vy_i)$                                                                                                                                              \\
                                           & VQGraph \citep{yang2023vqgraph}                                  & $\sum_{i \in \gV} I(\vx_i; \vy_i | \gS_i) + I(\vx_i; \vy_i) + I(\mA; \vx_i)$                                                                                                                                              \\ \cmidrule{2-3}
      \rowcolor{Gray}                      & \textbf{SimMLP}                                                  & $\sum_{i \in \gV} I(\vy_i; \vx_i) + I(\vx_i; \gS_i)$                                                                                                                                                                      \\ \bottomrule
    \end{tabular}
  }
  \label{tab:MI}
\end{table}

\noindent\textbf{Mutual Information Maximization.} Mutual information $I(\cdot; \cdot)$ is a concept in information theory, measuring the mutual dependency between two random variables. This has been widely used in signal processing and machine learning \citep{belghazi2018mutual}. Intuitively, maximizing the mutual information between two variables can increase their correlation (i.e., decrease their uncertainty). In this section, we interpret SimMLP and existing MLP learning methods from the perspective of mutual information maximization to analyze their learning objectives, as summarized in Table \ref{tab:MI}. Firstly, we unify the notations and introduce the key lemmas. A graph $\gG = (\mX, \mA, \mY)$ consists of node features $\mX$, graph structure $\mA$ and node labels $\mY$. We define ego-graph around node $i$ as $\gS_i = (\mX^{[i]}, \mA^{[i]})$, where $\mX^{[i]}$, $\mA^{[i]}$ denote node features and graph structure of $\gS_i$, respectively.

\begin{lemma}
  \label{thm:CE MI}
  Minimizing the cross-entropy $H(\mY; \hat{\mY} | \mX)$ is equivalent to maximizing the mutual information $I(\mX; \mY)$.
\end{lemma}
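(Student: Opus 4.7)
The plan is to decompose the conditional cross-entropy into a conditional-entropy term plus a KL-divergence term, then invoke the identity $I(\mX;\mY) = H(\mY) - H(\mY \mid \mX)$ to link the two quantities. Because the marginal entropy $H(\mY)$ is a property of the data distribution and is independent of the model, the equivalence will follow once we show that minimizing the cross-entropy drives $H(\mY \mid \mX)$ toward its infimum and hence $I(\mX;\mY)$ toward its supremum.

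First I would interpret $H(\mY; \hat{\mY} \mid \mX)$ as the expected negative log-likelihood $\E_{(\vx,\vy)}[-\log q(\vy \mid \vx)]$, where $q(\cdot \mid \vx)$ is the predictive distribution over labels induced by the model via $\hat{\mY}$. Adding and subtracting $\log p(\vy \mid \vx)$ inside the expectation yields the standard decomposition
\begin{equation*}
H(\mY; \hat{\mY} \mid \mX) \;=\; H(\mY \mid \mX) \;+\; \E_{\vx}\!\left[\KL\bigl(p(\mY \mid \vx)\,\|\,q(\hat{\mY} \mid \vx)\bigr)\right].
\end{equation*}
Since the KL term is nonnegative and vanishes precisely when $q$ matches the true conditional, minimizing the cross-entropy over the model class tightens this expression toward $H(\mY \mid \mX)$.

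Next I would substitute the identity $H(\mY \mid \mX) = H(\mY) - I(\mX;\mY)$ to obtain
\begin{equation*}
H(\mY; \hat{\mY} \mid \mX) \;=\; H(\mY) \;-\; I(\mX;\mY) \;+\; \E_{\vx}\!\left[\KL\bigl(p \,\|\, q\bigr)\right].
\end{equation*}
The first term is a constant with respect to the learned mapping from $\mX$ to $\hat{\mY}$, so the only model-dependent contribution is $-I(\mX;\mY) + \E[\KL]$. Consequently, decreasing the cross-entropy is equivalent, up to the additive constant $H(\mY)$, to increasing $I(\mX;\mY)$ (while simultaneously aligning $q$ with the true conditional to eliminate the KL slack).

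There is no genuine technical obstacle here; the only point requiring care is being explicit about the role of $\hat{\mY}$, namely that it denotes the output of a parameterized predictor on input $\mX$ so that $q(\hat{\mY} \mid \vx)$ is a well-defined conditional distribution on the label alphabet. I would close by emphasizing that the word \emph{equivalent} is used in the conventional information-theoretic sense of agreement up to a data-dependent constant, which is the form in which the lemma is subsequently invoked when reinterpreting the baselines and SimMLP in Table~\ref{tab:MI}.
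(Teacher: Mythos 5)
Your proposal is correct and follows essentially the same route as the paper's own proof: both decompose the cross-entropy as $H(\mY;\hat{\mY}\mid\mX) = H(\mY\mid\mX) + \KL(\mY\,\|\,\hat{\mY}\mid\mX)$ and invoke $I(\mX;\mY) = H(\mY) - H(\mY\mid\mX)$ with $H(\mY)$ treated as a constant. The only cosmetic difference is that the paper additionally cites the Max--Min optimization scheme of Boudiaf et al.\ (first fit the classifier to annihilate the KL slack, then optimize the encoder), whereas you fold that observation into the remark that the KL term vanishes when $q$ matches the true conditional; the substance is identical.
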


MLP minimizes the cross-entropy between ground-truth $\mY$ and the predictions relying on node features, i.e., $\hat{\mY} | \mX$, which is equivalent to maximize the mutual information $\sum_{i \in \gV} I(\vy_i; \vx_i)$. GNNs apply message passing to use localized structures in making predictions, which aims to maximize $\sum_{i \in \gV} I(\vy_i; \gS_i) = \sum_{i \in \gV} I(\vy_i; \mX^{[i]}) + \sum_{i \in \gV} I(\vy_i; \mA^{[i]} | \mX^{[i]})$. GraphMLP and GraphECL maximize the consistency between the target node and its neighborhood encoded by MLPs $I(\vx_i; \mX^{[i]})$, failing to learning the intrinsic correlation between node features and graph structures. GLNN distills knowledge from GNNs to MLPs to maximize $\sum_{i \in \gV} I(\vx_i; \vy_i | \gS_i) + I(\vx_i; \vy_i)$, where $\vy_i | \gS_i$ denotes the soft label from GNNs, but fails to explicitly utilize structural information in making predictions. Following GLNN, GENN and NOSMOG employ positional embeddings to leverage structural information by optimizing $I(\mA; \vy_i)$, and VQGraph uses codebook to inject structural knowledge on node features to further optimize $I(\mA; \vx_i)$. However, these methods cannot fully leverage the localized structures in predictions, leading to sub-optimal performance.

SimMLP employs self-supervised learning to maximize the mutual information between GNNs and MLPs \citep{bachman2019learning}. The objective is to maximize $\sum_{i \in \gV} I(\vy_i; \vx_i) + I(\vx_i; \gS_i)$. The first term optimizes the model on downstream tasks, corresponding to task-specific prediction head. The second term is the training objective of SimMLP (Equation \ref{eq:objective}) that denotes the alignment between GNNs and MLPs. When the second term is maximized, $\vx_i$ would preserve all information of $\gS_i$, turning the overall objective to maximize $\sum_{i \in \gV} I(\vy_i; \gS_i)$. This demonstrates the equivalence between SimMLP (in the optimal case) and GNNs, showing the superiority of SimMLP in leveraging graph structures for predictions. Our analysis aligns with \citet{chen2021on} and \citet{zhang2022graphless} that the expressiveness of MLPs on node classification task is bounded by the induced ego-graphs $\gS_i$. Despite our analysis is based on node classification, it is readily to be extended to link prediction or graph classification.

\noindent\textbf{Information Bottleneck Principle.} Information bottleneck \citep{tishby2000information,tishby2015deep} focuses on finding the optimal compression of observed random variables by achieving the trade-off between informativeness and generalization \citep{shwartz2017opening}. For example, given a random variable $\mX$ sampled from latent variable $\mY$, the aim is to find the optimal compression $\mZ^* = \argmin_{\mZ} I(\mX; \mZ) - \beta I(\mZ; \mY)$. Intuitively, minimizing $I(\mX; \mZ)$ aims to obtain the minimum compression, and maximizing $I(\mZ; \mY)$ preserves the essential information of $\mY$. For SimMLP, we assume the observed graph $\gG$ is sampled from latent graph $\gG_\gI$ (Proposition \ref{thm:objective}), and aim to compress the graph into $\mT = (\mH^{MLP}, \mH^{GNN})$.
\begin{proposition}
  \label{thm:information bottleneck}
  The optimal compression $\mT^*$ satisfies
  \begin{align}
    \mT^*
    = & \argmin_{\mH^{M}, \mH^{G}} \lambda H(\mH^{M} | \gG_\gI)  + H(\mH^{G})  + \lambda H(\mH^{G} | \gG_\gI) + H(\mH^{M} | \mH^{G}),
    \label{eq:information bottleneck}
  \end{align}
  where $\lambda = \frac{\beta}{1 - \beta} > 0$, $\mH^M$ and $\mH^G$ indicate $\mH^{MLP}$ and $\mH^{GNN}$.
\end{proposition}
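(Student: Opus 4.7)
The plan is to specialize the generic information bottleneck functional from the preceding paragraph to the compression variable $\mT = (\mH^M, \mH^G)$, use deterministic encoding to simplify one of the two mutual informations into an entropy, and then chain-rule both the unconditional and the conditional entropy of $\mT$ to recover the four-term form of Equation \ref{eq:information bottleneck}. Concretely, I would start from $\mT^* = \argmin_\mT \; I(\gG;\mT) - \beta I(\mT;\gG_\gI)$. Since $\mH^{MLP} = \gE(\vx)$ and $\mH^{GNN} = \phi(\mX,\mA)$ are deterministic functions of $\gG$, one has $H(\mT \mid \gG) = 0$, so $I(\gG;\mT) = H(\mT)$ and $I(\mT;\gG_\gI) = H(\mT) - H(\mT \mid \gG_\gI)$. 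Substituting and collecting terms turns the objective into $(1-\beta) H(\mT) + \beta H(\mT \mid \gG_\gI)$; assuming $\beta \in (0,1)$, I would scale by $1/(1-\beta)$ (which preserves the $\argmin$) and set $\lambda = \beta/(1-\beta) > 0$ to arrive at the scaled objective $H(\mT) + \lambda H(\mT \mid \gG_\gI)$.

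Next I would apply the chain rule of entropy to both terms. The unconditional side gives $H(\mT) = H(\mH^M, \mH^G) = H(\mH^G) + H(\mH^M \mid \mH^G)$, producing two of the four claimed pieces verbatim. For the conditional side, the target identity $H(\mH^M, \mH^G \mid \gG_\gI) = H(\mH^M \mid \gG_\gI) + H(\mH^G \mid \gG_\gI)$ is equivalent to the conditional independence $\mH^M \perp \mH^G \mid \gG_\gI$. This is where the structural details of SimMLP enter: by Equation \ref{eq:aug} the MLP and GNN branches consume independently drawn augmentations of $\gG$, so $\mH^M \perp \mH^G \mid \gG$; composing with the Markov chain $\gG_\gI \to \gG \to (\mH^M, \mH^G)$ upgrades this to conditional independence given $\gG_\gI$. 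Plugging both decompositions into the scaled objective reproduces Equation \ref{eq:information bottleneck}.

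The hard part will be establishing this conditional independence rigorously. Under Strategy 1 the two branches effectively share weights, so independence cannot be read off from distinct encoders and must be argued purely through the augmentation randomness; I would formalize this by introducing explicit augmentation variables $t_M, t_G$, independent of each other and of $\gG$, and then carrying out the chain rule on the enlarged probability space. If one insists on working in the original space, the factorization weakens to the standard inequality $H(\mH^M, \mH^G \mid \gG_\gI) \le H(\mH^M \mid \gG_\gI) + H(\mH^G \mid \gG_\gI)$, in which case the right-hand side of Equation \ref{eq:information bottleneck} becomes an upper bound on the true bottleneck objective and minimizing it is a sufficient surrogate. A secondary nuisance is the range of $\beta$: I would state the standing assumption $\beta \in (0,1)$ at the outset so that the $(1-\beta)$ rescaling is positive and $\lambda > 0$, matching the sign claimed in the proposition.
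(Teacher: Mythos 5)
Your derivation follows the same route as the paper's: expand both mutual informations into entropies, drop $H(\mT \mid \gG)$ (the paper does this silently; your observation that it vanishes because both encoders are deterministic is the correct justification), rescale by $1/(1-\beta)$ to obtain $H(\mT) + \lambda H(\mT \mid \gG_\gI)$, and chain-rule the two joint entropies into the four displayed terms. You also correctly put your finger on the one step the paper glosses over: splitting $H(\mH^M, \mH^G \mid \gG_\gI)$ into $H(\mH^M \mid \gG_\gI) + H(\mH^G \mid \gG_\gI)$ is an equality only under conditional independence given $\gG_\gI$, and otherwise the right-hand side of Equation \ref{eq:information bottleneck} is merely an upper bound on the true bottleneck objective. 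Your subadditivity fallback is the honest reading of the result and is, implicitly, all the paper's own proof establishes.

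However, your primary argument for that conditional independence is wrong. From $\mH^M \perp \mH^G \mid \gG$ together with the Markov chain $\gG_\gI \to \gG \to (\mH^M,\mH^G)$ you cannot conclude $\mH^M \perp \mH^G \mid \gG_\gI$: conditioning on the coarser variable leaves the randomness of $\gG$ given $\gG_\gI$ as a shared source of correlation between the two branches. Concretely, $p(\mH^M,\mH^G \mid \gG_\gI) = \int p(\mH^M \mid \gG)\, p(\mH^G \mid \gG)\, p(\gG \mid \gG_\gI)\, d\gG$ does not factor in general; taking both encoders to be the identity on a binary $\gG$ with $\gG_\gI$ constant yields perfectly correlated, hence dependent, outputs. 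The independent augmentation variables $t_M, t_G$ only decouple the branches conditionally on the realized $\gG$, not after marginalizing over it. So you should either state conditional independence given $\gG_\gI$ as an explicit standing assumption, or present the four-term expression as a minimized upper bound --- your own fallback --- which is also the most defensible reading of the paper's final step.
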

$I(\cdot ; \cdot)$ and $H(\cdot)$ denote mutual information and entropy, respectively. Intuitively, the optimal compression (Equation \ref{eq:information bottleneck}) is attainable with the optimal encoder $\gE^*$ (Equation \ref{eq:thm objective}). In particular, minimizing $H(\mH^{MLP} | \gG_\gI)$ and $H(\mH^{GNN} | \gG_\gI)$ preserves the latent information in GNN and MLP embeddings, which can be instantiated as minimizing $\| \mH^{MLP} - \mF^* \|^2 + \| \mH^{GNN} - \mF^* \|^2$. In addition, the minimum conditional entropy $H(\mH^{MLP} | \mH^{GNN})$ denotes the alignment between GNN and MLP embeddings, which could be modeled as maximizing $\sum_i \Cov(\mH^{MLP}_i, \mH^{GNN}_i)$. Furthermore, minimizing entropy $H(\mH^{GNN})$ reduces the uncertainty of the GNN embeddings, which could be achieved by preserving more node feature information, i.e., minimizing $\| \gD(\mH^{GNN}) - \mX \|^2$. The analysis bridges the information bottleneck and the objective function of SimMLP, showing the potential in learning generalizable embeddings \citep{alemi2017deep}.

\begin{table}[!t]
  \centering
  \caption{SimMLP shares two inductive biases with GNNs, i.e., homophily and local structure importance, which are measured by \textbf{smoothness} and \textbf{min-cut}, respectively.}
  \label{tab:inductive bias}
  \resizebox{\linewidth}{!}{
    \begin{tabular}{l  ccc | ccc}
      \toprule
      ~                                  & \multicolumn{3}{c}{Smoothness$\downarrow$} & \multicolumn{3}{c}{Min-Cut$\uparrow$}                                                                     \\ \cmidrule(lr){2-4}\cmidrule(lr){5-7}
      Methods                            & Cora                                       & Citeseer                              & Pubmed         & Cora           & Citeseer       & Pubmed         \\ \midrule\midrule
      Raw Node Feature                   & 0.822                                      & 0.783                                 & 0.734          & $-$            & $-$            & $-$            \\ \midrule\midrule
      SAGE \citep{hamilton2017inductive} & 0.113                                      & 0.184                                 & 0.143          & 0.924          & 0.943          & 0.918          \\
      BGRL \citep{thakoor2022largescale} & 0.155                                      & 0.102                                 & 0.333          & 0.885          & 0.935          & 0.856          \\ \midrule\midrule
      MLP \citep{zhang2022graphless}     & 0.463                                      & 0.444                                 & 0.485          & 0.666          & 0.804          & 0.863          \\
      GLNN \citep{zhang2022graphless}    & 0.282                                      & 0.268                                 & 0.421          & 0.886          & 0.916          & 0.793          \\
      NOSMOG \citep{tian2022learning}    & 0.267                                      & 0.230                                 & 0.394          & 0.902          & 0.932          & 0.834          \\
      VQGraph \citep{yang2023vqgraph}    & 0.253                                      & 0.212                                 & 0.396          & 0.914          & 0.940          & 0.831          \\  \midrule\midrule
      \rowcolor{Gray}\textbf{SimMLP}     & \textbf{0.196}                             & \textbf{0.170}                        & \textbf{0.360} & \textbf{0.934} & \textbf{0.958} & \textbf{0.886} \\ \bottomrule
    \end{tabular}
  }
\end{table}

\begin{table*}[!t]
  \centering
  \caption{Node classification accuracy in transductive setting, where the overall best and the sub-category best are indicated by \textbf{bold} and \underline{underline}, respectively. We report the mean and standard deviation of ten runs with different random seeds. }
  \label{tab:transductive}
  \resizebox{\linewidth}{!}{
    \begin{tabular}{cl | c | c|c|c|c|c|c|c|c|c | c}
      \toprule
                                           & Methods                              & Cora                          & Citeseer                      & Pubmed                        & Computer                      & Photo                         & Co-CS                         & Co-Phys                       & Wiki-CS                       & Flickr                        & Arxiv                         & \textbf{\textit{Avg.}} \\ \midrule\midrule
      \multirow{4}{*}{\rotatebox{90}{GNN}} & SAGE \citep{hamilton2017inductive}   & 81.4{\small ±0.9}             & \underline{70.4{\small ±1.4}} & \underline{85.9{\small ±0.4}} & 88.9{\small ±0.3}             & \underline{93.8{\small ±0.4}} & 93.4{\small ±0.2}             & \underline{95.7{\small ±0.1}} & 80.9{\small ±0.6}             & 48.5{\small ±0.8}             & \textbf{72.1{\small ±0.3}}    & \underline{81.1}       \\
                                           & GAT \citep{velickovic2018graph}      & \underline{82.3{\small ±1.2}} & 68.9{\small ±1.5}             & 84.7{\small ±0.4}             & \textbf{89.9{\small ±0.5}}    & 91.9{\small ±0.4}             & 92.0{\small ±0.3}             & 95.1{\small ±0.2}             & 80.0{\small ±0.6}             & 51.4{\small ±0.2}             & 71.8{\small ±0.4}             & 80.8                   \\
                                           & APPNP \citep{gasteiger2018combining} & 75.5{\small ±1.6}             & 68.1{\small ±1.2}             & 84.6{\small ±0.3}             & 87.4{\small ±0.3}             & 93.4{\small ±0.5}             & \underline{94.6{\small ±0.2}} & 95.4{\small ±0.1}             & 79.1{\small ±0.3}             & 47.5{\small ±0.3}             & 71.0{\small ±0.2}             & 79.7                   \\
                                           & SGC \citep{wu2019simplifying}        & 81.8{\small ±0.9}             & 69.0{\small ±1.6}             & 85.3{\small ±0.3}             & 89.3{\small ±0.6}             & 92.7{\small ±0.4}             & 94.0{\small ±0.2}             & 94.8{\small ±0.2}             & \underline{81.1{\small ±0.6}} & \textbf{51.8{\small ±0.2}}    & 70.0{\small ±0.4}             & 81.0                   \\ \midrule\midrule
      \multirow{5}{*}{\rotatebox{90}{GCL}} & DGI \citep{velickovic2018deep}       & 82.3{\small ±0.6}             & 71.8{\small ±0.7}             & 76.8{\small ±0.7}             & 80.0{\small ±0.2}             & 91.6{\small ±0.2}             & 92.2{\small ±0.5}             & 94.5{\small ±0.0}             & 76.4{\small ±0.6}             & 46.9{\small ±0.1}             & 70.1{\small ±0.2}             & 78.3                   \\
                                           & MVGRL \citep{hassani2020contrastive} & \underline{83.9{\small ±0.5}} & \underline{72.1{\small ±1.3}} & \underline{86.3{\small ±0.6}} & 87.9{\small ±0.3}             & 91.9{\small ±0.2}             & 92.2{\small ±0.1}             & 95.3{\small ±0.0}             & 77.6{\small ±0.1}             & 49.3{\small ±0.1}             & 70.9{\small ±0.1}             & 80.7                   \\
                                           & GRACE \citep{zhu2020deep}            & 80.5{\small ±1.0}             & 65.5{\small ±2.1}             & 84.6{\small ±0.5}             & \underline{88.4{\small ±0.3}} & \underline{92.8{\small ±0.6}} & 93.0{\small ±0.3}             & 95.4{\small ±0.1}             & \underline{78.6{\small ±0.5}} & 49.3{\small ±0.1}             & \underline{71.0{\small ±0.1}} & 79.5                   \\
                                           & GCA \citep{zhu2021graph}             & 83.5{\small ±0.5}             & 71.3{\small ±0.2}             & 86.0{\small ±0.4}             & 87.4{\small ±0.3}             & 92.6{\small ±0.2}             & \underline{93.1{\small ±0.0}} & \underline{95.7{\small ±0.0}} & 78.4{\small ±0.1}             & 49.0{\small ±0.1}             & 70.9{\small ±0.1}             & \underline{80.8}       \\
                                           & BGRL \citep{thakoor2022largescale}   & 81.3{\small ±0.6}             & 66.9{\small ±0.6}             & 84.9{\small ±0.2}             & 88.2{\small ±0.2}             & 92.5{\small ±0.1}             & 92.1{\small ±0.1}             & 95.2{\small ±0.1}             & 77.5{\small ±0.8}             & \underline{49.7{\small ±0.1}} & 70.8{\small ±0.1}             & 79.9                   \\ \midrule\midrule
      \multirow{8}{*}{\rotatebox{90}{MLP}} & MLP \citep{zhang2022graphless}       & 64.5{\small ±1.9}             & 64.0{\small ±1.3}             & 80.7{\small ±0.3}             & 80.8{\small ±0.3}             & 87.8{\small ±0.5}             & 91.7{\small ±0.3}             & 95.1{\small ±0.1}             & 75.2{\small ±0.5}             & 46.2{\small ±0.1}             & 56.4{\small ±0.3}             & 74.2                   \\
                                           & GraphMLP \citep{hu2021graph}         & 79.5{\small ±0.8}             & 72.1{\small ±0.5}             & 84.3{\small ±0.2}             & 84.0{\small ±0.6}             & 90.9{\small ±1.0}             & 90.4{\small ±0.6}             & 93.5{\small ±0.2}             & 76.4{\small ±0.5}             & 46.3{\small ±0.2}             & 63.4{\small ±0.2}             & 78.1                   \\
                                           & GLNN \citep{zhang2022graphless}      & 81.3{\small ±1.2}             & 71.2{\small ±0.7}             & \underline{86.3{\small ±0.5}} & 87.5{\small ±0.6}             & 93.9{\small ±0.3}             & \underline{94.2{\small ±0.2}} & 95.4{\small ±0.1}             & \underline{80.7{\small ±0.7}} & 46.2{\small ±0.2}             & 64.0{\small ±0.5}             & 80.1                   \\
                                           & GENN \citep{wang2023graph}           & 82.1{\small ±0.8}             & 71.4{\small ±1.3}             & 86.3{\small ±0.3}             & 87.1{\small ±0.6}             & 93.6{\small ±0.7}             & 93.8{\small ±0.3}             & 95.5{\small ±0.1}             & 80.5{\small ±0.7}             & 46.4{\small ±0.3}             & 70.1{\small ±0.6}             & 80.7                   \\
                                           & VQGraph \citep{yang2023vqgraph}      & \underline{82.3{\small ±0.6}} & \underline{73.0{\small ±1.2}} & 86.0{\small ±0.4}             & 87.5{\small ±0.7}             & 93.9{\small ±0.3}             & 93.8{\small ±0.1}             & 95.6{\small ±0.1}             & \underline{79.9{\small ±0.2}} & \underline{47.0{\small ±0.2}} & 70.8{\small ±0.8}             & 81.0                   \\
                                           & NOSMOG \citep{tian2022learning}      & 82.3{\small ±1.1}             & 72.4{\small ±1.3}             & 86.2{\small ±0.3}             & \underline{87.6{\small ±1.1}} & \underline{93.9{\small ±0.5}} & 93.8{\small ±0.2}             & \underline{95.7{\small ±0.1}} & 80.5{\small ±0.8}             & 46.7{\small ±0.3}             & \underline{70.8{\small ±0.4}} & \underline{81.0}       \\
      \rowcolor{Gray}                      & \textbf{SimMLP}                      & \textbf{84.6{\small ±0.2}}    & \textbf{73.5{\small±0.5}}     & \textbf{87.0{\small ±0.1}}    & \underline{88.5{\small ±0.2}} & \textbf{94.3{\small ±0.1}}    & \textbf{94.9{\small ±0.1}}    & \textbf{96.2{\small ±0.0}}    & \textbf{81.2{\small ±0.1}}    & \underline{49.9{\small ±0.1}} & \underline{71.1{\small ±0.1}} & \textbf{82.1}          \\ \bottomrule
    \end{tabular}
  }
\end{table*}

\begin{table}[!t]
  \centering
  \caption{Node classification accuracy in inductive settings. }
  \label{tab:production}
  \resizebox{\linewidth}{!}{
    \begin{tabular}{l cccc}
      \toprule
      Methods                            & Cora                          & Citeseer                      & Pubmed                        & Arxiv                         \\ \midrule\midrule
      SAGE \citep{hamilton2017inductive} & 77.5{\small ±1.8}             & \underline{68.4{\small ±1.6}} & \underline{85.0{\small ±0.4}} & 68.5{\small ±0.6}             \\
      BGRL \citep{thakoor2022largescale} & \underline{77.7{\small ±1.1}} & 64.3{\small ±1.6}             & 84.0{\small ±0.5}             & \underline{69.3{\small ±0.4}} \\ \midrule\midrule
      MLP \citep{zhang2022graphless}     & 63.8{\small ±1.7}             & 64.0{\small ±1.2}             & 80.9{\small ±0.5}             & 55.9{\small ±0.5}             \\
      GLNN \citep{zhang2022graphless}    & 78.3{\small ±1.0}             & 69.6{\small ±1.1}             & \underline{85.5{\small ±0.5}} & 63.5{\small ±0.5}             \\
      GENN \citep{wang2023graph}         & 77.8{\small ±1.6}             & 67.3{\small ±1.5}             & 84.3{\small ±0.5}             & 68.5{\small ±0.5}             \\
      VQGraph \citep{yang2023vqgraph}    & \underline{78.4{\small ±1.8}} & \underline{70.4{\small ±1.1}} & 85.4{\small ±0.6}             & \underline{69.3{\small ±0.9}} \\
      NOSMOG \citep{tian2022learning}    & 77.8{\small ±1.9}             & 68.6{\small ±1.4}             & 83.8{\small ±0.5}             & 69.1{\small ±0.8}             \\
      GraphECL \citep{xiao2024efficient} & 77.8{\small ±1.3}             & 69.2{\small ±1.2}             & 84.5{\small ±0.5}             & 69.1{\small ±0.6}             \\ \midrule\midrule
      \rowcolor{Gray} \textbf{SimMLP}    & \textbf{81.4{\small ±1.2}}    & \textbf{72.3{\small ±0.9}}    & \textbf{86.5{\small ±0.3}}    & \textbf{70.2{\small ±0.5}}    \\
      \bottomrule
    \end{tabular}
  }
\end{table}

\noindent\textbf{Inductive Bias.} To further analyze the equivalence between SimMLP and GNNs, we investigate whether SimMLP and GNNs have similar inductive biases. We consider SimMLP has two key inductive biases, i.e., homophily philosophy and local structure importance, as shown in Table \ref{tab:inductive bias} (where more resutls are in Appendix \ref{sec:ind bias full}). Homophily implies topologically close nodes have similar properties, which could be naturally incorporated in message passing \citep{li2022finding} that updates node embeddings based on neighborhoods. However, MLP-based methods cannot (or partially) leverage homophily bias due to the lack of structural learning ability. To evaluate the homophily, we measure the distance between embeddings of directly connected nodes, which corresponds to \textit{smoothness}. In particular, we instantiate this as Mean Average Distance (MAD) \citep{chen2020measuring}:
\begin{equation}
  \text{MAD} = \frac{\sum_{i \in \gV} \sum_{j \in \gN(i)}(\vh_i^{MLP} - \vh_j^{MLP})^2}{\sum_{i \in \gV} \sum_{j \in \gN(i)} \mathbf{1}}.
\end{equation}
Intuitively, a low smoothness value indicates a high similarity between directly connected nodes, demonstrating the capability to leverage graph structural information \citep{hou2019measuring}. As shown in Table \ref{tab:inductive bias}, compared to GNNs, MLP-based methods fall short in reducing the distance between topologically close nodes, even for NOSMOG and VQGraph that directly integrate graph structures. SimMLP goes beyond these methods by aligning GNNs and MLPs in a intricate embedding space, approaching to GNNs.

\begin{table}[!t]
  \centering
  \caption{Node classification accuracy under cold-start setting. }
  \label{tab:cold-start}

  \resizebox{\linewidth}{!}{
    \begin{tabular}{lcccc}
      \toprule
      Methods                            & Cora                          & Citeseer                      & Pubmed                        & Arxiv                         \\ \midrule\midrule
      SAGE \citep{hamilton2017inductive} & 69.7{\small ±2.9}             & \underline{67.1{\small ±2.6}} & 82.9{\small ±1.0}             & 55.5{\small ±0.8}             \\
      BGRL \citep{thakoor2022largescale} & \underline{79.4{\small ±1.7}} & 65.0{\small ±2.2}             & \underline{84.0{\small ±1.0}} & \underline{65.0{\small ±0.5}} \\ \midrule\midrule
      MLP \citep{zhang2022graphless}     & 64.2{\small ±2.1}             & 64.4{\small ±1.8}             & 80.9{\small ±0.7}             & 55.9{\small ±0.7}             \\
      GLNN \citep{zhang2022graphless}    & \underline{72.0{\small ±1.7}} & 69.1{\small ±2.6}             & 84.4{\small ±0.9}             & 60.6{\small ±0.6}             \\
      GENN \citep{wang2023graph}         & 68.1{\small ±2.2}             & 65.1{\small ±2.8}             & 78.4{\small ±0.8}             & 62.6{\small ±0.7}             \\
      VQGraph \citep{yang2023vqgraph}    & 70.4{\small ±3.4}             & 70.0{\small ±1.6}             & \underline{84.5{\small ±1.5}} & \underline{64.0{\small ±1.7}} \\
      NOSMOG \citep{tian2022learning}    & 68.1{\small ±3.0}             & 67.1{\small ±2.1}             & 77.4{\small ±0.8}             & 63.5{\small ±0.8}             \\
      GraphECL \citep{xiao2024efficient} & 71.5{\small ±4.2}             & \underline{70.9{\small ±2.4}} & 82.4{\small ±0.9}             & 63.3{\small ±0.7}             \\ \midrule\midrule
      \rowcolor{Gray} \textbf{SimMLP}    & \textbf{80.5{\small ±2.2}}    & \textbf{72.8{\small ±1.6}}    & \textbf{86.4{\small ±0.5}}    & \textbf{66.1{\small ±1.1}}    \\
      \bottomrule
    \end{tabular}
  }
\end{table}

\begin{table*}[!t]
  \centering
  \caption{The performance on graph classification tasks with accuracy (\%).}
  \label{tab:graph classification}
  \resizebox{0.85\linewidth}{!}{
    \begin{tabular}{cl | c|c|c|c|c|c|c}
      \toprule
                                    & Methods                                  & IMDB-B                        & IMDB-M                        & COLLAB                        & PTC-MR                        & MUTAG                         & DD                            & PROTEINS                      \\ \midrule\midrule
      Supervised                    & GIN \citep{xu2018how}                    & 75.1{\small ±5.1}             & 52.3{\small ±2.8}             & 80.2{\small ±1.9}             & 64.6{\small ±1.7}             & 89.4{\small ±5.6}             & 74.9{\small ±3.1}             & 76.2{\small ±2.8}             \\ \midrule\midrule
      \multirow{2}{*}{Graph Kernel} & WL \citep{shervashidze2011weisfeiler}    & 72.3{\small ±3.4}             & 47.0{\small ±0.5}             & -                             & 58.0{\small ±0.5}             & 80.7{\small ±3.0}             & -                             & 72.9{\small ±0.6}             \\
                                    & DGK \citep{yanardag2015deep}             & 67.0{\small ±0.6}             & 44.6{\small ±0.5}             & -                             & 60.1{\small ±2.6}             & 87.4{\small ±2.7}             & -                             & 73.3{\small ±0.8}             \\ \midrule\midrule
      \multirow{5}{*}{GCL}          & graph2vec \citep{narayanan2017graph2vec} & 71.1{\small ±0.5}             & 50.4{\small ±0.9}             & -                             & 60.2{\small ±6.9}             & 83.2{\small ±9.3}             & -                             & 73.3{\small ±2.1}             \\
                                    & MVGRL \citep{hassani2020contrastive}     & 71.8{\small ±0.8}             & \underline{50.8{\small ±0.9}} & 73.1{\small ±0.6}             & -                             & \textbf{89.2{\small ±1.3}}    & 75.2{\small ±0.6}             & 74.0{\small ±0.3}             \\
                                    & InfoGraph \citep{Sun2020InfoGraph}       & \underline{73.0{\small ±0.9}} & 49.7{\small ±0.5}             & 70.7{\small ±1.1}             & \textbf{61.7{\small ±1.4}}    & \underline{89.0{\small ±1.1}} & 72.9{\small ±1.8}             & 74.4{\small ±0.3}             \\
                                    & GraphCL \citep{you2020graph}             & 71.1{\small ±0.4}             & 48.6{\small ±0.7}             & 71.4{\small ±1.2}             & -                             & 86.8{\small ±1.3}             & \textbf{78.6{\small ±0.4}}    & 74.4{\small ±0.5}             \\
                                    & JOAO \citep{you2021graph}                & 70.2{\small ±3.1}             & 49.2{\small ±0.8}             & 69.5{\small ±0.4}             & -                             & 87.4{\small ±1.0}             & -                             & \underline{74.6{\small ±0.4}} \\ \midrule\midrule
      \multirow{3}{*}{MLP}          & MLP$^*$                                  & 49.5{\small ±1.7}             & 33.1{\small ±1.6}             & 51.9{\small ±1.0}             & 54.4{\small ±1.4}             & 67.2{\small ±1.0}             & 58.6{\small ±1.4}             & 59.2{\small ±1.0}             \\
                                    & MLP + KD$^*$                             & 72.9{\small ±1.0}             & 48.1{\small ±0.5}             & \underline{75.4{\small ±1.5}} & 59.4{\small ±1.4}             & 87.4{\small ±0.7}             & 73.6{\small ±1.7}             & 73.5{\small ±1.8}             \\
      \rowcolor{Gray}               & \textbf{SimMLP}                          & \textbf{74.1{\small ±0.2}}    & \textbf{51.4{\small ±0.5}}    & \textbf{81.0{\small ±0.1}}    & \underline{60.3{\small ±1.1}} & 87.7{\small ±0.2}             & \underline{78.4{\small ±0.5}} & \textbf{75.3{\small ±0.1}}    \\ \bottomrule
    \end{tabular}
  }

  {\small The reported results of baselines are from previous papers if available \citep{you2020graph,you2021graph,hou2022graphmae}. $^*$ indicates the results are from our implementation.}
\end{table*}

Local structure importance describes the local neighborhoods preserve the crucial information for predictions. GNNs utilize localized information in making predictions, naturally emphasizing the localized structures, but MLPs generally take node features as input, failing to fully leverage structural information. Alternatively, it measures the alignment between localized structures and model predictions, aligning to the philosophy of \textit{Min-Cut} \citep{stoer1997simple}. In particular, denote predictions $\hat{\mY}$ as graph partitions, a high Min-Cut value indicates a high correlation between predictions and localized structures, as it implies high intra-partition connectivity and low inter-partition connectivity. We follow \citet{shi2000normalized} to model the Min-Cut problem:
\begin{equation}
  \text{Min-Cut} = tr(\hat\mY^T\mA\hat\mY) / tr(\hat\mY^T\mD\hat\mY),
\end{equation}
where $\mA$ and $\mD$ are adjacency matrix and diagonal node degree matrix, respectively. As shown in Table \ref{tab:inductive bias}, SimMLP demonstrates an inductive bias towards local structure importance, evidenced by the optimal average Min-Cut result.

\section{Experiments}

\begin{figure}[!t]
  \centering
  \includegraphics[width=\linewidth]{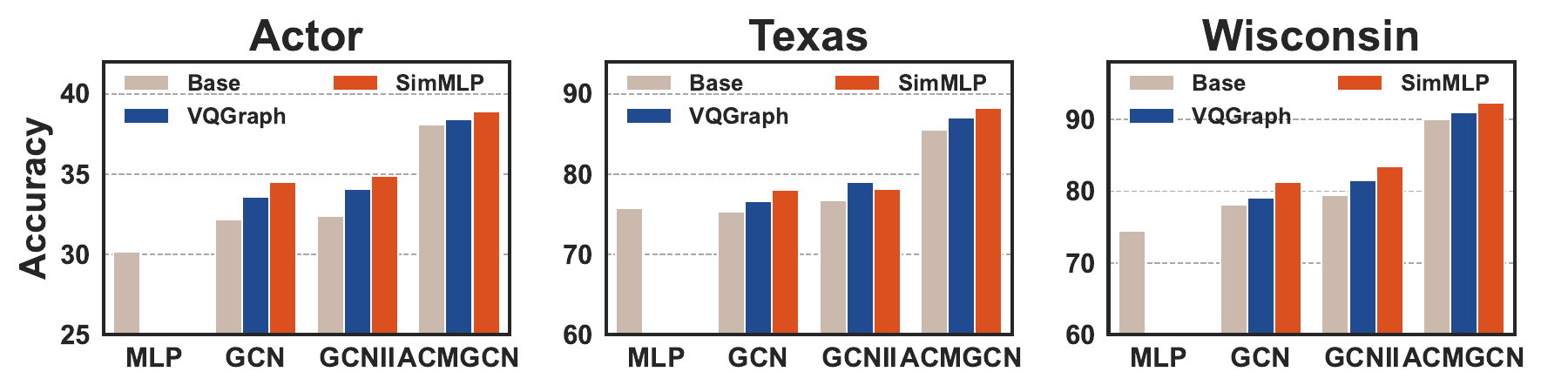}
  \caption{Node classification on heterophily graphs.}
  \label{fig:heterophily graph}
\end{figure}

\begin{figure}[!t]
  \centering
  \includegraphics[width=\linewidth]{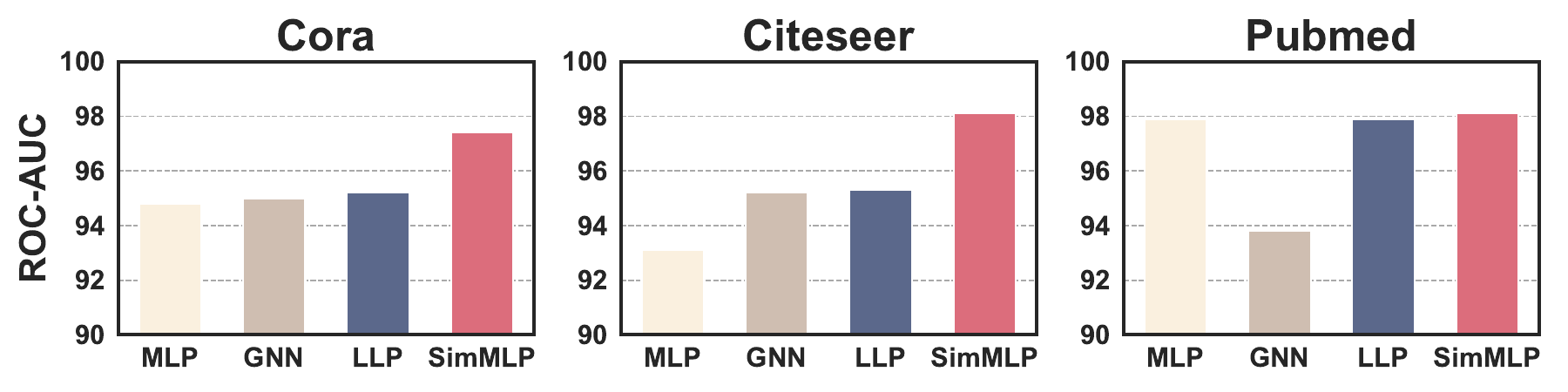}
  \caption{Link prediction performance.}
  \label{fig:link pred}
\end{figure}

\subsection{Experimental Setup}

We conduct experiments on node classification, link prediction, and graph classification. We use 20 datasets in total, including homophily graphs: Cora, Citeseer, Pubmed, Computer, Photo, Co-CS, Co-Phys, Wiki-CS, Flickr, and Arxiv; heterophily graphs: Actor, Texas, Wisconsin, and molecule/social networks (for graph classification): IMDB-B, IMDB-D, COLLAB, PTC-MR, MUTAG, DD, PROTEINS. More evaluation protocols are in Appendix \ref{sec:appendix exp setup}.

\subsection{Node Classification}

\noindent\textbf{Transductive Setting.} Given a graph $\gG = (\gV, E)$, all nodes $v \in \gV$ are visible during training, with the nodes partitioned into non-overlapping sets: $\gV = \gV_{train} \sqcup \gV_{val} \sqcup \gV_{test}$. We use $\gV_{train}$ for training and $\gV_{val}$ and $\gV_{test}$ for evaluation. The results, presented in Table \ref{tab:transductive}, show that SimMLP outperforms self-supervised GCL methods in all settings and surpasses supervised GNNs in 7 out of 10 datasets. Compared to other MLP-based methods, SimMLP achieves a superior average performance of 82.1, exceeding the second-best score of 81.0, thereby demonstrating its effectiveness in node classification. We present the comprehensive ablation study in Appendix \ref{sec:ablation}.

\begin{table}[!t]
  \captionof{table}{SimMLP achieves significant acceleration and improved performance over inference acceleration techniques. }
  \label{tab:inference acceleration}
  \resizebox{\linewidth}{!}{
    \begin{tabular}{l cc cc}
      \toprule
                                                 & \multicolumn{2}{c}{Flickr}  & \multicolumn{2}{c}{Arxiv}                                                \\ \cmidrule(lr){2-3}\cmidrule(lr){4-5}
      Methods                                    & Time (ms)                   & Acc.                      & Time (ms)                    & Acc.          \\ \midrule\midrule
      SAGE \citep{hamilton2017inductive}         & 80.7                        & 47.2                      & 314.7                        & 68.5          \\ \midrule\midrule
      SGC \citep{wu2019simplifying}              & 76.9 (1.1$\times$)          & 47.4                      & 265.9 (1.2$\times$)          & 68.9          \\
      APPNP \citep{gasteiger2018combining}       & 78.1 (1.0$\times$)          & 47.5                      & 284.1 (1.1$\times$)          & 69.1          \\ \midrule\midrule
      QSAGE \citep{zhang2022graphless}           & 70.6 (1.1$\times$)          & 47.2                      & 289.5 (1.1$\times$)          & 68.5          \\
      PSAGE \citep{zhang2022graphless}           & 67.4 (1.2$\times$)          & 47.3                      & 297.5 (1.1$\times$)          & 68.6          \\
      Neighbor Sample \citep{zhang2022graphless} & 25.5 (3.2$\times$)          & 47.0                      & 78.3 (4.0$\times$)           & 68.4          \\ \midrule\midrule
      \rowcolor{Gray}\textbf{SimMLP}             & \textbf{0.9 (89.7$\times$)} & \textbf{49.3}             & \textbf{2.5 (125.9$\times$)} & \textbf{70.2} \\
      \bottomrule
    \end{tabular}
  }

\end{table}

\begin{figure*}[!t]
  \centering
  \includegraphics[width=0.9\linewidth]{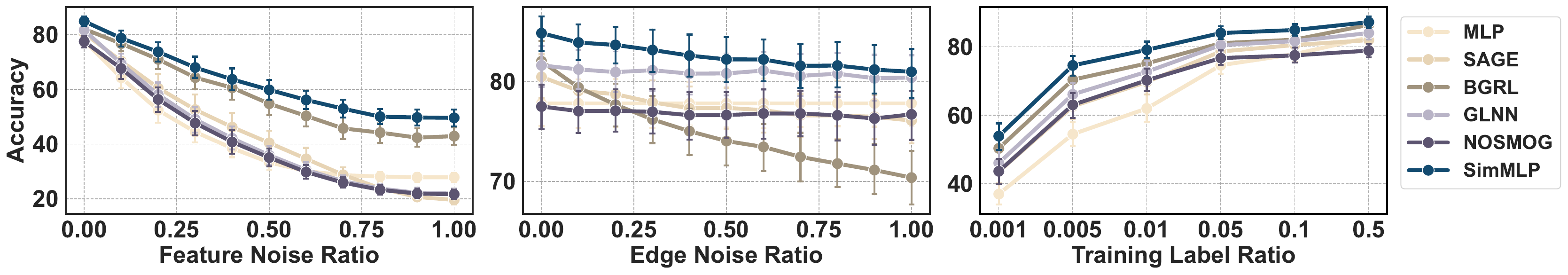}
  \caption{\textbf{Left}: Feature Noise. SimMLP is robustness to feature noise, whereas other MLP-based methods are susceptible to it. Middle: Edge Noise. SimMLP consistently demonstrates robustness against edge noise, even in scenarios with exceptionally high noise ratios. \textbf{Right}: Label Scarcity. SimMLP significantly outperforms baselines, especially with limited labels.}
  \label{fig:robustness}
\end{figure*}

\noindent\textbf{Inductive Setting.} In this paper, we consider inductive inference for unseen nodes within the same graph. We partition the graph $\gG = (\gV, E)$ into a non-overlapping transductive graph $\gG^T = (\gV^T, E^T)$ and an inductive graph $\gG^I = (\gV^I, E^I)$. The transductive graph $\gG^{T}$ contains 80\% of the nodes, further divided into $\gV^{T} = \gV^{T}{train} \sqcup \gV^{T}{valid} \sqcup \gV^{T}{test}$, which are used for training in the transductive setting. The inductive graph $\gG^{I}$ consists of the remaining 20\% of nodes, which are unseen during training. Compared to the settings in \citep{zhang2022graphless, tian2022learning}, our approach is more challenging because $\gV^I$ is disconnected from $\gV^T$ during inference. We evaluate three measures: (1) \textit{transductive} results, evaluated on $\gV^{T}{test}$, (2) \textit{inductive} results, evaluated on $\gV^I$, and (3) \textit{production} results, which are the weighted average of the previous two. The \textit{production} results are reported in Table \ref{tab:production}. We observe that SimMLP outperforms various baselines, highlighting the advantage of leveraging structural information for unseen nodes. Additional results can be found in Appendix \ref{sec:production full}.

\noindent\textbf{Cold-Start Setting.} We follow the inductive setting by partitioning the graph $\gG = (\gV, E)$ into a transductive graph $\gG^T$ and an inductive graph $\gG^I$. The key difference is that the nodes in $\gG^I$ are isolated, with all edges removed, such that $\gG^I = (\gV^I, \emptyset)$. This approach aligns with real-world applications where new users often emerge independently \citep{hao2021pre}. We report the performance on $\gV^I$ as the cold-start results, as shown in Table \ref{tab:cold-start}. SimMLP achieves significant improvements over all baselines, highlighting its superior structural learning capabilities. Notably, SimMLP shows performance gains of 7\% and 18\% over the vanilla MLP on Flickr and Arxiv, respectively, and 7\% and 6\% enhancements over VQGraph. Additional results can be found in Appendix \ref{sec:cold start full}.

\noindent\textbf{Heterophily Graphs.} The homophily inductive bias in SimMLP (Table \ref{tab:inductive bias}) stems from the chosen GNN encoder, which in our case is GCN. However, by employing different GNN architectures, we can introduce varying inductive biases that are better suited for heterophily graphs. To explore this, we use two advanced methods--ACMGCN \citep{luan2022revisiting} and GCNII \citep{chen2020simplegraph}--which are known for their effectiveness in handling heterophily graphs, alongside MLP and GCN. We evaluate these models on three challenging heterophily datasets: Actor, Texas, and Wisconsin. The results, presented in Figure \ref{fig:heterophily graph}, demonstrate that SimMLP can adapt to different heterophily-oriented GNN architectures, consistently enhancing their performance. This highlights SimMLP's adaptability and broad applicability across various graph domains and GNN architectures.

\subsection{Link Prediction}
\label{sec:link prediction}

Due to its self-supervised nature, SimMLP can be easily extended to link-level and graph-level tasks. We compare SimMLP against MLP, GNN, and LLP \citep{guo2023linkless}--a state-of-the-art MLP-based method for link prediction--using Cora, Citeseer, and Pubmed as benchmark datasets. We strictly adhere to the experimental setup from \citep{guo2023linkless}. The results, illustrated in Figure \ref{fig:link pred}, show that SimMLP achieves the best performance across all three datasets, with particularly significant improvements on Cora and Citeseer. These findings underscore SimMLP's superiority in modeling localized structural information for accurately determining node connectivity.

\subsection{Graph Classification}
\label{sec:graph classification}

Although SimMLP is primarily designed for learning localized structures, it still delivers strong performance on graph classification tasks, which require an understanding of global structural knowledge. We compare SimMLP against various baselines, including supervised GNNs such as GIN \citep{xu2018how}, as well as self-supervised GNNs like the WL kernel \citep{shervashidze2011weisfeiler}, DGK \citep{yanardag2015deep}, graph2vec \citep{narayanan2017graph2vec}, MVGRL \citep{hassani2020contrastive}, InfoGraph \citep{Sun2020InfoGraph}, GraphCL \citep{you2020graph}, and JOAO \citep{you2021graph}. Additionally, we implemented an MLP learning method on graphs by applying a KL divergence loss similar to that used in GLNN \citep{zhang2022graphless}. The graph classification results, presented in Table \ref{tab:graph classification}, show that SimMLP achieves the best or second-best performance on 6 out of 7 datasets across various domains, with particularly strong results on the large-scale COLLAB dataset. These findings highlight the potential of SimMLP for graph-level tasks.

\subsection{Inference Acceleration}

Table \ref{tab:inference acceleration} compares SimMLP with existing acceleration methods \citep{zhang2022graphless}, including quantization (QSAGE), pruning (PSAGE), and neighbor sampling (Neighbor Sample) in an inductive setting. We also include SGC and APPNP, which simplify message passing for faster inference. Our observations indicate that even the most efficient of these methods offers only marginal acceleration ($3.2\sim4.0\times$) and inevitably sacrifices model performance. In contrast, SimMLP achieves remarkable inference acceleration ($89.7\sim125.9\times$) by eliminating the neighborhood fetching process. Additionally, we provide a comparison with MLP-based methods, as shown in Figure \ref{fig:time_acc}. This figure illustrates the trade-off between model performance and inference time on Arxiv in a cold-start setting. SimMLP delivers significant performance improvements over MLP-based methods and substantial inference acceleration compared to GNN-based methods, demonstrating the best overall trade-off.

\subsection{Robustness Analysis}
\label{sec:robustness}

In this section, we analyze the robustness of SimMLP on noisy data and with scarce labels. We report the results for the inductive set $\gV^\gI$ in the inductive setting, averaging over seven datasets: Cora, Citeseer, Pubmed, Computer, Photo, Co-CS, and Wiki-CS.

\noindent\textbf{Noisy Node Features.} We assess the impact of node feature noise by introducing random Gaussian noise. Specifically, we replace $\mX$ with $\tilde\mX = (1 - \alpha)\mX + \alpha x$, where $x$ represents random noise independent of $\mX$, and the noise level $\alpha \in [0, 1]$. As shown in Figure \ref{fig:robustness} (\textbf{Left}), SimMLP outperforms all baselines across all settings, despite the fact that node content quality is critical for MLP-based methods \citep{zhang2022graphless,guo2023linkless}. We attribute this robustness to the augmentation process, which synthesizes additional high-quality node and ego-graph pairs, thereby enhancing MLP training. This augmentation also contributes to the robustness of BGRL. However, we observe that the performance of other MLP-based methods deteriorates rapidly as noise levels increase.

\noindent\textbf{Noisy Topology.} To introduce structural noise, we randomly flip edges within the graph. Specifically, we replace $\mA$ with $\tilde\mA = \mM \odot (1 - \mA) + (1 - \mM) \odot \mA$, where $\mM_{ij} \sim \gB(p)$, and $\gB(p)$ is a Bernoulli distribution with probability $p$. The results under varying noise levels are depicted in Figure \ref{fig:robustness} (\textbf{Middle}). SimMLP consistently outperforms other methods, demonstrating its robustness. While increasing noise levels significantly degrade the performance of GNNs, especially self-supervised BGRL, they have minimal impact on MLPs, even when $\tilde\mA$ becomes independent of $\mA$.

\noindent\textbf{Label Scarcity.} We also examine SimMLP's robustness under label scarcity. Figure \ref{fig:robustness} (\textbf{Right}) presents model performance across various label ratios for node classification. Our method consistently outperforms all other baselines, even with extremely limited training data (0.001). This highlights SimMLP's resilience to label scarcity. Furthermore, we observe that self-supervised methods demonstrate greater robustness \citep{huang2023towards} compared to supervised approaches, likely due to their ability to leverage unlabeled data during training.

\section{Conclusion}

MLPs offer rapid inference on graph-structured datasets, yet the lack in learning structural information limits their performance. In this paper, we propose SimMLP, the first MLP learning method that is equivalent to GNNs (in the optimal case). Our insight is that modeling the fine-grained correlation between node features and graph structures preserves generalized structural information. To instantiate the insight, we apply self-supervised alignment between GNNs and MLPs in the embedding space. Experimental results show that SimMLP is generalizable to unseen nodes, robust against noisy graphs and label scarcity, and flexible to various graph-related tasks.


\begin{acks}
  This work was partially supported by the NSF under grants IIS-2321504, IIS-2334193, IIS-2340346, IIS-2217239, CNS-2426514, CNS-2203261, and CMMI-2146076. Any opinions, findings, and conclusions or recommendations expressed in this material are those of the authors and do not necessarily reflect the views of the sponsors.
\end{acks}

\bibliographystyle{ACM-Reference-Format}
\bibliography{citation}

\newpage
\appendix

\section{Proof}
\label{sec:proof}

\subsection{Proof of Proposition \ref*{thm:objective}}
\label{proof:optimal encoder}

\begin{proof}
  Our proof is based on a mild assumption:
  \begin{itemize}
    \item The graph $\gG = (\mA, \mX)$ is sampled from a latent graph $\gG_\gI = (\mA, \mF)$ \citep{xie2022self} following the distribution $\gG \sim P(\gG_\gI)$, where $\mF \in \R^{N \times d}$ represents the latent node semantics. This assumption is the extension of latent variable assumption, a general assumption in statistics and machine learning, which is based on the principle that observed data is sampled from an unobserved distribution.
  \end{itemize}

  Then, we introduce some notations used in the proof:
  \begin{itemize}
    \item The MLP encoder $\gE$ and the decoder $\gD$ are defined as fully-connected layer, which ensures the $l$-Lipschitz continuity with respect to the $l_2$-norm. This is a common property in neural networks with continuous activation functions, e.g., ReLU \citep{virmaux2018lipschitz}. This property is data-agnostic and thus suitable for most real-world graphs. The output of MLP encoder is $\mH^{MLP} = \gE(\mX)$ with $\mH^{MLP} \in \R^{N \times d'}$.
    \item GNN encoder $\phi$ yields $\mH^{GNN} = \phi(\mX, \mA)$ with $\mH^{GNN} \in \R^{N \times d'}$.
    \item $\mF^* \in \R^{N \times d'}$ denotes the lossless compression of $\mF$ that $\E[\mX | \mA, \mF^*] = \mF$.
  \end{itemize}

  The Equation \ref{eq:thm objective} can be rewritten as:

  \begin{align}
    \nonumber\gE^* = & \argmin_{\gE} \E {\left\| \mH^{MLP} - \mH^{GNN} \right\|}^{2} +  \left\|\gD(\mH^{GNN}) - \mX \right\|^2                                                \\
    \nonumber=       & \argmin_{\gE} \E {\left\|( \mH^{MLP} - \mF^*) - (\mH^{GNN} - \mF^*) \right\|}^{2} +  \left\|\gD(\mH^{GNN}) - \mX \right\|^2                            \\
    \nonumber=       & \argmin_{\gE} \E \left[ \left\| \mH^{MLP} - \mF^* \right\|^2 + \left\| \mH^{GNN} - \mF^* \right\|^2 + \left\| \gD(\mH^{GNN}) - \mX \right\|^2  \right] \\
    \nonumber        & - 2  \E \left[ \left\langle \mH^{MLP} - \mF^*, \mH^{GNN} - \mF^* \right\rangle   \right]                                                               \\
    \nonumber=       & \argmin_{\gE} \E \left[ \left\| \mH^{MLP} - \mF^* \right\|^2 + \left\| \mH^{GNN} - \mF^* \right\|^2 + \left\| \gD(\mH^{GNN}) - \mX \right\|^2  \right] \\
    \nonumber        & - 2  \E_{\mF^*} \left[ \sum_i \left( \mH_i^{MLP} - \mF^*_i \right) \left( \mH_i^{GNN} - \mF^*_i \right) | \mF^* \right]                                \\
    \nonumber=       & \argmin_{\gE} \E \left[ \left\| \mH^{MLP} - \mF^* \right\|^2 + \left\| \mH^{GNN} - \mF^* \right\|^2 + \left\| \gD(\mH^{GNN}) - \mX \right\|^2  \right] \\
    \nonumber        & - 2 \E_{\mF^*} \left[ \sum_i \Cov(\mH^{MLP}_i - \mF^*, \mH^{GNN}_i - \mF^*) | \mF^* \right].                                                           \\
    \nonumber=       & \argmin_{\gE} \E \left[ \left\| \mH^{MLP} - \mF^* \right\|^2 + \left\| \mH^{GNN} - \mF^* \right\|^2 + \left\| \gD(\mH^{GNN}) - \mX \right\|^2  \right] \\
                     & - 2 \E_{\mF^*} \left[ \sum_i \Cov(\mH^{MLP}_i, \mH^{GNN}_i) | \mF^* \right].
    \label{eq:extended}
  \end{align}

  Then, with a bit of simple transformations, the Equation \ref{eq:extended} can be expressed in the form of Equation \ref{eq:thm objective}. We explain these four terms in details. The first two terms $\left\| \mH^{MLP} - \mF^* \right\|^2$ and $\left\| \mH^{GNN} - \mF^* \right\|^2$ indicate the reconstruction errors of MLP embedding $\mH^{MLP}$ and GNN embedding $\mH^{GNN}$ on the latent variable $\mF^*$, ensuring the invariance on the latent graph $\gG_\gI$. The third term $\left\| \gD(\mH^{GNN}) - \mX \right\|^2$ reconstructs the node feature $\mX$ using GNN embeddings $\mH^{GNN}$, mitigating the risk of potential distribution shifts. The last term $-2 \sum_i \Cov(\mH^{MLP}_i, \mH^{GNN}_i)$ maximizes the covariance between GNN and MLP embeddings at each dimension, aligning GNNs and MLPs in the embedding space.

\end{proof}

\subsection{Proof of Lemma \ref*{thm:CE MI}}
\label{proof:CE MI}

\begin{proof}

  We follow the paper \citep{boudiaf2020unifying} to prove the lemma. We show the equivalence between these two terms by expanding $H(\mY; \hat{\mY} | \mX)$ and $I(\mX; \mY)$. We first expand the mutual information as
  \begin{equation}
    I(\mX; \mY) = H(\mY) - H(\mY | \mX).
  \end{equation}
  Maximizing the mutual information $I(\mX; \mY)$ indicates minimizing the conditional entropy $H(\mY | \mX)$. The entropy on the label $H(\mY)$ is a constant, which can be ignored.

  The cross-entropy $H(\mY; \hat{\mY} | \mX)$ can be written as the combination of conditional entropy $H(\mY | \mX)$ and KL divergence $\gD_{KL}( \mY \| \hat{\mY} | \mX )$:
  \begin{align}
    \nonumber H(\mY; \hat{\mY} | \mX) & = -\sum_i (\mY_i | \mX) \log (\hat{\mY}_i | \mX)                                     \\
    \nonumber                         & = -\sum_i (\mY_i | \mX) \log (\mY_i | \mX) + \sum_i (\mY_i | \mX) \log (\mY_i | \mX) \\
    \nonumber                         & -\sum_i \mY_i \log (\hat{\mY}_i | \mX)                                               \\
    \nonumber                         & = H(\mY | \mX) + \sum_i (\mY_i | \mX) \log \frac{(\mY_i | \mX)}{(\hat{\mY}_i | \mX)} \\
                                      & = H(\mY | \mX) + \gD_{KL}(\mY \| \hat{\mY} | \mX)
    \label{eq:CE MI}
  \end{align}

  Considering Equation \ref{eq:CE MI}, minimizing the cross-entropy $H(\mY; \hat{\mY} | \mX)$ can minimize $H(\mY | \mX)$ (as well as $\gD_{KL}( \mY \| \hat{\mY} | \mX )$), which is equivalent to maximizing the mutual information $I(\mX; \mY)$. Based on the analysis in \citep{boudiaf2020unifying}, Equation \ref{eq:CE MI} can be optimized in a \textit{Max-Min} manner. In particular, the first step is to freeze the encoder and only optimize the classifier, corresponding to fix $H(\mY | \mX)$ and minimize $\gD_{KL}(\mY \| \hat{\mY} | \mX)$. The KL term would ideally vanish at the end of this step. Following step involves optimizing the parameters of the encoder while fixing the classifier.

\end{proof}

\subsection{Proof of Proposition \ref*{thm:information bottleneck}}
\label{proof:optimal compression}

\noindent\textit{Proof.} Before the proof, we need to provide some notations. We aim to compress the original graph $\gG$ into $\mT = (\mH^{MLP}, \mH^{GNN})$ by preserving the information of latent graph $\gG_\gI$. Based on the definition of information bottleneck \citep{tishby2000information}, the optimal compression is
\begin{equation}
  \mT^* = \argmin_\mT I(\gG; \mT) - \beta I(\mT; \gG_\gI),
\end{equation}
where $\beta$ denotes the Lagrange multiplier and $I(\cdot ; \cdot)$ is the mutual information. The optimal compression $\mT^*$ preserves the essential latent information by maximizing $\beta I(\mT; \gG_\gI)$ and discard the noises contained in the observed data $\gG$ by minimizing $I(\gG; \mT)$. To handle the equation in a more accessible manner, we convert it as
\begin{align}
  \nonumber \mT^* & = \argmin_\mT I(\gG; \mT) - \beta I(\mT; \gG_\gI)                           \\
  \nonumber       & = \argmin_\mT (1 - \beta) H(\mT) + \beta H(\mT | \gG_\gI) - H(\mT | \gG)    \\
  \nonumber       & = \argmin_\mT H(\mT) + \lambda H(\mT |  \gG_\gI)                            \\
  \nonumber       & = \argmin_{\mH^{MLP}, \mH^{GNN}} \lambda H(\mH^{MLP} | \gG_\gI)             \\
                  & + H(\mH^{GNN}) + \lambda H(\mH^{GNN} | \gG_\gI) + H(\mH^{MLP} | \mH^{GNN}),
\end{align}
where $\lambda = \frac{\beta}{1 - \beta} > 0$ and $H(\cdot)$ is the entropy.

\section{Experimental Setup}
\label{sec:appendix exp setup}


\subsection{Dataset Statistics}
\label{sec:dataset}

\begin{table}[!t]
  \centering
  \caption{The statistics of node classification datasets. }
  \resizebox{\linewidth}{!}{
    \begin{tabular}{lcccc}
      \toprule
      \textbf{Dataset} & \textbf{\# Nodes} & \textbf{\# Edges} & \textbf{\# Features} & \textbf{\# Classes} \\ \midrule\midrule
      Cora             & 2,708             & 10,556            & 1,433                & 7                   \\
      Citeseer         & 3,327             & 9,104             & 3,703                & 6                   \\
      Pubmed           & 19,717            & 88,648            & 500                  & 3                   \\
      Computer         & 13,752            & 491,722           & 767                  & 10                  \\
      Photo            & 7,650             & 238,162           & 745                  & 8                   \\
      Co-CS            & 18,333            & 163,788           & 6,805                & 15                  \\
      Co-Phys          & 34,493            & 495,924           & 8,415                & 5                   \\
      Wiki-CS          & 11,701            & 432,246           & 300                  & 10                  \\
      Flickr           & 89,250            & 899,756           & 500                  & 7                   \\
      Arxiv            & 169,343           & 1,166,243         & 128                  & 40                  \\ \midrule\midrule
      Texas            & 183               & 295               & 1,703                & 5                   \\
      Wisconsin        & 251               & 515               & 1,703                & 5                   \\
      Actor            & 7,600             & 30,019            & 932                  & 5                   \\
      \bottomrule
    \end{tabular}
  }
  \label{tab:dataset}
\end{table}

\begin{table}[!t]
  \centering
  \caption{The statistics of graph classification datasets. }
  \resizebox{\linewidth}{!}{
    \begin{tabular}{lccccc}
      \toprule
      \textbf{Dataset} & \textbf{\# Graphs} & \textbf{\# Nodes} & \textbf{\# Edges} & \textbf{\# Features} & \textbf{\# Classes} \\ \midrule\midrule
      IMDB-B           & 1,000              & $\sim$19.8        & $\sim$193.1       & -                    & 2                   \\
      IMDB-M           & 1,500              & $\sim$13.0        & $\sim$65.9        & -                    & 3                   \\
      COLLAB           & 5,000              & $\sim$74.5        & $\sim$4,914.4     & -                    & 3                   \\
      PTC-MR           & 344                & $\sim$14.3        & $\sim$14.7        & 18                   & 2                   \\
      MUTAG            & 118                & $\sim$17.9        & $\sim$39.6        & 7                    & 2                   \\
      DD               & 1,178              & $\sim$284.3       & $\sim$715.6       & 89                   & 2                   \\
      PROTEINS         & 1,113              & $\sim$39.1        & $\sim$145.6       & 3                    & 2                   \\ \bottomrule
    \end{tabular}
  }
  \label{tab:statistics graph clf}
\end{table}

\subsubsection*{Node Classification.} We select 10 benchmark datasets to evaluate the performance of SimMLP and other baselines. These datasets are collected from diverse domains, encompassing citation networks, social networks, wikipedia networks, etc. We present the statistics of these datasets in Table \ref{tab:dataset}. Specifically, Cora, Citeseer, Pubmed \citep{yang2016revisiting} are three citation networks, in which nodes denote papers and edges represent citations. The node features are represented as bag-of-words based on paper keywords. Computer (Amazon-CS) and Photo (Amazon-Photo) \citep{shchur2018pitfalls} are two co-purchase networks that describe the frequent co-purchases of items (nodes). Co-CS (Coauthor-CS) and Co-Phys (Coauthor-physics) \citep{shchur2018pitfalls} consist of nodes representing authors and edges indicating collaborations between authors. Wiki-CS \citep{mernyei2020wiki} is extracted from Wikipedia, comprising computer science articles (nodes) connected by hyperlinks (edges). Flickr \citep{Zeng2020GraphSAINT} consists online images, with the goal of categorizing images based on their descriptions and common properties. All these datasets are available through PyG (Pytorch Geometric), and we partition them randomly into training, validation, and testing sets with a split ratio of 10\%/10\%/80\%. Additionally, we employ Arxiv dataset from OGB benchmarks \citep{hu2020open} to evaluate model performance on large-scale datasets. We process the dataset in PyG using OGB public interfaces with standard public split setting.

\subsubsection*{Link Prediction.} We take five public benchmark datasets, including Cora, Citeseer, Pubmed, Computer, and Photo, for evaluating link prediction performance. The statistics is presented in Table \ref{tab:dataset}, and we adopt the hyper-parameters in Table \ref{tab:hyperparameter node}.

\subsubsection*{Graph Classification.} For graph classification, all datasets are sourced from TU datasets \citep{morris2020tudataset}\footnote{These datasets are available in PyG library.}, including biochemical molecule datasets (PTC-MR, MUTAG, DD, PROTEINS) and social networks (IMDB-B, IMDB-M, COLLAB). Table \ref{tab:statistics graph clf} shows the statistics of these datasets. In PTC-MR and DD, we utilize the original node features, whereas for other datasets lacking rich node features, we generate one-hot features based on node degrees. We follow 10-fold cross-validation to evaluate model performance.

\subsection{Summary of Baselines}
\label{sec:baselines}

We compare SimMLP against a range of baselines, encompassing supervised GNNs, self-supervised graph contrastive learning (GCL) methods, and MLP-based graph learning methods.

\subsubsection*{Node Classification.}

\textbf{Supervised GNNs}: Our primary node classification baselines include GraphSAGE \citep{hamilton2017inductive} and GAT \citep{velickovic2018graph}. Furthermore, we also incorporate SGC \citep{wu2019simplifying} and APPNP \citep{gasteiger2018combining} as additional node classification baselines. \textbf{Self-supervised GNNs}: We compare SimMLP to self-supervised graph learning methods. DGI \citep{velickovic2018deep} and MVGRL \citep{hassani2020contrastive} conduct contrastive learning between graph patches and graph summaries to integrate knowledge into node embeddings. GRACE \citep{zhu2020deep} and subsequent GCA \citep{zhu2021graph} perform contrast between nodes in two corrupted views to acquire augmentation-invariant embeddings. BGRL \citep{thakoor2022largescale} utilizes predictive objective for node-level contrastive learning to achieve efficient training. \textbf{MLPs on Graphs}: In node classification, we employ basic MLP that considers only node content as baseline. Furthermore, we incorporate GraphMLP \citep{hu2021graph} that trains an MLP by emphasizing consistency between target nodes and their direct neighborhoods. We exclude the following works \citep{dong2022node,liu2022mlp} as baselines since they are high-order versions of GraphMLP. To achieve this, we slightly modify the original GraphMLP to enable the ability in learning high-order information, and search the number of layers within \{1, 2, 3\}. GLNN \citep{zhang2022graphless} employs knowledge distillation to transfer knowledge from GNNs to MLPs, GENN leverages positional encoding to acquire structural knowledge, while NOSMOG \citep{tian2022learning} jointly integrates positional information and robust training strategies based on GLNN. Note that the public code of GENN is not available, thus we implement GENN based on the code of NOSMOG. VQGraph \citep{yang2023vqgraph} is the recent SOTA method that leverages codebook to learn structural information.

\subsubsection*{Link Prediction}

For baselines, we use the basic GNN, MLP, and LLP \citep{guo2023linkless}, a state-of-the-art MLP learning framework for link prediction. We strictly follow the experimental settings of \citet{guo2023linkless} and adopt the AUC as metric.

\subsubsection*{Graph Classification}

We use the following baselines. \textbf{Supervised GNNs}: We utilize 5-layer GIN \citep{xu2018how} as the baseline. \textbf{Self-supervised GNNs}: For graph-level tasks, we explore traditional graph kernels for classification, including WL kernel \citep{shervashidze2011weisfeiler} and DGK \citep{yanardag2015deep}. Furthermore, we include advanced contrastive learning approaches, such as graph2vec \citep{narayanan2017graph2vec}, MVGRL \citep{hassani2020contrastive}, InfoGraph \citep{Sun2020InfoGraph}, GraphCL \citep{you2020graph}, and JOAO \citep{you2021graph}, which conduct contrastive learning between embeddings of two augmented graphs. \textbf{MLPs on Graphs}: For standard MLP, we append a pooling function following the encoder to generate graph embeddings, which are utilized to perform predictions. Considering other MLP learning baselines, they cannot be directly applied on graph-level tasks. To this end, we extend GLNN \citep{zhang2022graphless} to graph classification by distilling knowledge from pre-trained GINs to MLPs on graph-level embeddings, dubbed as MLP + KD.

\subsection{Hyper-parameter setting}
\label{sec:hyperparameter}

\begin{table*}[!ht]
  \centering
  \caption{Hyper-parameters used for SimMLP for node-level task. }
  \resizebox{0.8\linewidth}{!}{
    \begin{tabular}{lcccccccccc}
      \toprule
                         & \multicolumn{10}{c}{Node Classification              \& Link Prediction}                                                                                     \\ \cmidrule(lr){2-11}
      Hyper-parameters   & Cora                                                                     & Citeseer & Pubmed & Computer & Photo & Co-CS & Co-Phys & Wiki-CS & Flickr & Arxiv \\ \midrule\midrule
      Epochs             & 1000                                                                     & 1000     & 1000   & 1000     & 1000  & 2000  & 1000    & 2000    & 2000   & 5000  \\
      Optimizer          & \multicolumn{10}{c}{AdamW used for all datasets}                                                                                                             \\
      Learning Rate      & 1e-3                                                                     & 5e-4     & 5e-4   & 1e-3     & 1e-3  & 1e-4  & 1e-3    & 5e-4    & 1e-3   & 1e-3  \\
      Weight Decay       & 0                                                                        & 5e-5     & 1e-5   & -        & 1e-4  & -     & 1e-4    & 1e-5    & 5e-4   & -     \\
      Activation         & \multicolumn{10}{c}{PReLU used for all datasets}                                                                                                             \\
      Hidden Dimension   & 512                                                                      & 512      & 512    & 512      & 512   & 512   & 512     & 512     & 1024   & 1024  \\
      Normalization      & \multicolumn{10}{c}{Batchnorm used for all datasets}                                                                                                         \\
      \# MLP Layers      & 2                                                                        & 2        & 2      & 3        & 2     & 2     & 2       & 2       & 2      & 8     \\
      \# GNN Layers      & 2                                                                        & 3        & 3      & 2        & 1     & 1     & 1       & 2       & 3      & 3     \\
      Feature Mask Ratio & 0.50                                                                     & 0.75     & 0.25   & 0.25     & 0.25  & 0.50  & 0.75    & 0.00    & 0.25   & 0.00  \\
      Edge Mask Ratio    & 0.25                                                                     & 0.50     & 0.25   & 0.25     & 0.50  & 0.75  & 0.50    & 0.25    & 0.50   & 0.25  \\ \bottomrule
    \end{tabular}
  }
  \label{tab:hyperparameter node}
\end{table*}

\begin{table*}[!ht]
  \centering
  \caption{Hyper-parameters of SimMLP on graph-level task. }
  \resizebox{0.7\linewidth}{!}{
    \begin{tabular}{lccccccc}
      \toprule
                           & \multicolumn{7}{c}{\textbf{Graph Classification}}                                                        \\ \cmidrule(lr){2-8}
      Hyper-parameters     & IMDB-B                                              & IMDB-M & COLLAB & PTC-MR & MUTAG & DD   & PROTEINS \\ \midrule\midrule
      Epochs               & 200                                                 & 100    & 30     & 100    & 100   & 100  & 500      \\
      Optimizer            & \multicolumn{7}{c}{AdamW used for all datasets}                                                          \\
      Learning Rate        & 1e-2                                                & 1e-2   & 5e-4   & 1e-2   & 1e-2  & 1e-3 & 1e-3     \\
      Weight Decay         & \multicolumn{7}{c}{0 used for all datasets}                                                              \\
      Batch Size           & 64                                                  & 128    & 32     & 64     & 64    & 32   & 64       \\
      Hidden Dimension     & \multicolumn{7}{c}{512 used for all datasets}                                                            \\
      Pooling              & MEAN                                                & MEAN   & MEAN   & SUM    & SUM   & MEAN & SUM      \\
      Activation           & \multicolumn{7}{c}{PReLU used for all datasets}                                                          \\
      Normalization        & \multicolumn{7}{c}{Batchnorm used for all datasets}                                                      \\
      Raw Feature          & N                                                   & N      & N      & Y      & N     & Y    & N        \\
      Deg4Feature          & Y                                                   & Y      & Y      & N      & Y     & N    & Y        \\
      \# Encoder Layers    & \multicolumn{7}{c}{2 used for all datasets}                                                              \\
      \# Aggregator Layers & 2                                                   & 2      & 2      & 2      & 1     & 2    & 1        \\
      Feature Mask Ratio   & 0.50                                                & 0.25   & 0.75   & 0.25   & 0.5   & 0.00 & 0.00     \\
      Edge Mask Ratio      & 0.75                                                & 0.50   & 0.75   & 0.00   & 0.25  & 0.00 & 0.50     \\ \bottomrule
    \end{tabular}
  }
  \label{tab:hyperparameter graph}
\end{table*}

We run each experiment 10 times with different seeds to alleviate the impact of randomness. We perform hyper-parameter tuning for each approach using a grid search strategy. Specifically, we set the number of epochs to 1,000, the hidden dimension to 512, and employ PReLU as the activation function. We explore various learning rates \{5e-4, 1e-4, 5e-4, 1e-3, 5e-3, 1e-2\}, weight decay values \{5e-5, 1e-5, 5e-3, 1e-4, 0\}, and the number of layers \{1, 2, 3\}. In self-supervised learning methods, we employ a 2-layer GCN \citep{kipf2017semisupervised} as the encoder for node-level tasks. Subsequently, we assess the quality of the acquired embeddings by training a Logistic regression function on downstream tasks \citep{zhu2020deep}. For other settings, we follow the settings reported in the original papers. Regarding SimMLP, we provide a comprehensive overview of the hyper-parameter settings for node classification task in Table \ref{tab:hyperparameter node}. For graph classification, the experimental setting is the same as Sec. \ref{sec:hyperparameter}. The only exclusion is we utilize a 5-layer GIN \citep{xu2018how} as the encoder. The readout function are selected from \{MEAN, SUM, MAX\}. The hyper-parameters of SimMLP is in Table \ref{tab:hyperparameter graph}.

\section{Additional Results}

\subsection{Full Inductive Setting Results}
\label{sec:production full}

See Table \ref{tab:ind full}.

\begin{table*}[!ht]
  \centering
  \caption{Node classification accuracy (\%) under inductive (production) scenario for both transductive and inductive settings. {\it ind} represents the accuracy on $\mathcal{V}^{I}$, {\it trans} represents the accuracy on $\mathcal{V}_{test}^T$, and {\it prod} is the interpolated accuracy of both {\it ind} and {\it trans}. }
  \label{tab:ind full}
  \resizebox{\linewidth}{!}{
    \begin{tabular}{clcccccccccc|c}
      \toprule
      Methods                   & Setting        & Cora              & Citeseer          & Pubmed            & Computer          & Photo             & Co-CS             & Co-Phys           & Wiki-CS           & Flickr            & OGB-Arxiv         & \textit{\textbf{Avg.}} \\ \midrule
                                & \textit{prod}  & 77.5{\small ±1.8} & 68.4{\small ±1.6} & 85.0{\small ±0.4} & 87.2{\small ±0.4} & 93.2{\small ±0.5} & 92.9{\small ±0.4} & 95.7{\small ±0.1} & 79.3{\small ±0.7} & 47.2{\small ±0.7} & 68.5{\small ±0.6} & 79.5                   \\
                                & \textit{trans} & 79.5{\small ±1.5} & 68.7{\small ±1.4} & 85.6{\small ±0.3} & 88.0{\small ±0.3} & 93.7{\small ±0.4} & 93.1{\small ±0.3} & 95.8{\small ±0.0} & 80.0{\small ±0.4} & 48.2{\small ±0.6} & 71.8{\small ±0.5} & 80.4                   \\
      \multirow{-3}{*}{SAGE}    & \textit{ind}   & 69.7{\small ±2.9} & 67.1{\small ±2.6} & 82.9{\small ±1.0} & 84.5{\small ±0.9} & 91.2{\small ±0.6} & 91.9{\small ±0.7} & 95.6{\small ±0.1} & 76.3{\small ±1.6} & 43.3{\small ±1.1} & 55.5{\small ±0.8} & 75.8                   \\ \midrule
                                & \textit{prod}  & 77.7{\small ±1.1} & 64.3{\small ±1.6} & 84.0{\small ±0.5} & 87.3{\small ±0.5} & 91.5{\small ±0.6} & 91.3{\small ±0.4} & 94.4{\small ±0.3} & 76.3{\small ±1.1} & 49.1{\small ±0.3} & 69.3{\small ±0.4} & 78.5                   \\
                                & \textit{trans} & 77.3{\small ±0.9} & 64.2{\small ±1.4} & 84.0{\small ±0.3} & 87.3{\small ±0.4} & 91.5{\small ±0.5} & 91.3{\small ±0.3} & 94.4{\small ±0.3} & 76.3{\small ±1.0} & 49.1{\small ±0.2} & 70.4{\small ±0.4} & 78.6                   \\
      \multirow{-3}{*}{BGRL}    & \textit{ind}   & 79.4{\small ±1.7} & 65.0{\small ±2.2} & 84.0{\small ±1.0} & 87.6{\small ±0.8} & 91.5{\small ±1.1} & 91.1{\small ±0.5} & 94.3{\small ±0.5} & 76.0{\small ±1.6} & 49.3{\small ±0.6} & 65.0{\small ±0.5} & 78.3                   \\ \midrule
                                & \textit{prod}  & 63.8{\small ±1.7} & 64.0{\small ±1.2} & 80.9{\small ±0.5} & 81.0{\small ±0.5} & 87.7{\small ±0.9} & 91.7{\small ±0.6} & 95.2{\small ±0.1} & 75.1{\small ±0.7} & 46.1{\small ±0.2} & 55.9{\small ±0.5} & 74.1                   \\
                                & \textit{trans} & 63.7{\small ±1.5} & 63.9{\small ±1.1} & 80.9{\small ±0.4} & 81.1{\small ±0.5} & 87.7{\small ±0.9} & 91.7{\small ±0.5} & 95.2{\small ±0.1} & 75.1{\small ±0.4} & 46.2{\small ±0.2} & 55.9{\small ±0.5} & 74.1                   \\
      \multirow{-3}{*}{MLP}     & \textit{ind}   & 64.2{\small ±2.1} & 64.4{\small ±1.8} & 80.9{\small ±0.7} & 80.8{\small ±0.9} & 87.9{\small ±1.0} & 91.8{\small ±0.8} & 95.2{\small ±0.2} & 74.9{\small ±1.8} & 46.1{\small ±0.5} & 55.9{\small ±0.7} & 74.2                   \\ \midrule
                                & \textit{prod}  & 78.3{\small ±1.0} & 69.6{\small ±1.1} & 85.4{\small ±0.5} & 87.0{\small ±0.5} & 93.3{\small ±0.4} & 93.7{\small ±0.4} & 95.8{\small ±0.1} & 78.4{\small ±0.5} & 46.1{\small ±0.3} & 63.5{\small ±0.5} & 79.1                   \\
                                & \textit{trans} & 79.9{\small ±0.9} & 69.7{\small ±0.8} & 85.7{\small ±0.4} & 87.8{\small ±0.5} & 93.8{\small ±0.4} & 93.8{\small ±0.3} & 95.8{\small ±0.0} & 78.6{\small ±0.3} & 46.1{\small ±0.2} & 64.3{\small ±0.5} & 79.6                   \\
      \multirow{-3}{*}{GLNN}    & \textit{ind}   & 72.0{\small ±1.7} & 69.1{\small ±2.6} & 84.4{\small ±0.9} & 84.0{\small ±0.7} & 91.1{\small ±0.5} & 93.3{\small ±0.5} & 95.7{\small ±0.1} & 77.6{\small ±1.4} & 46.1{\small ±0.4} & 60.6{\small ±0.6} & 77.4                   \\ \midrule
                                & \textit{prod}  & 77.8{\small ±1.6} & 67.3{\small ±1.5} & 84.3{\small ±0.5} & 85.8{\small ±1.2} & 92.1{\small ±1.0} & 93.6{\small ±0.4} & 95.7{\small ±0.1} & 78.3{\small ±1.0} & 45.6{\small ±0.5} & 68.5{\small ±0.5} & 78.9                   \\
                                & \textit{trans} & 80.3{\small ±1.4} & 67.9{\small ±1.2} & 85.8{\small ±0.4} & 87.4{\small ±1.0} & 93.4{\small ±0.6} & 93.8{\small ±0.4} & 95.8{\small ±0.1} & 80.3{\small ±0.9} & 45.7{\small ±0.5} & 70.0{\small ±0.5} & 80.0                   \\
      \multirow{-3}{*}{GENN}    & \textit{ind}   & 68.1{\small ±2.2} & 65.1{\small ±2.8} & 78.4{\small ±0.8} & 79.1{\small ±1.8} & 87.1{\small ±2.4} & 92.7{\small ±0.5} & 95.2{\small ±0.1} & 70.1{\small ±1.7} & 45.1{\small ±0.7} & 62.6{\small ±0.7} & 74.3                   \\ \midrule
                                & \textit{prod}  & 78.4{\small ±1.8} & 70.4{\small ±1.1} & 85.4{\small ±0.6} & 87.4{\small ±1.0} & 93.3{\small ±0.7} & 93.7{\small ±0.4} & 95.8{\small ±0.1} & 79.0{\small ±1.0} & 46.4{\small ±0.4} & 69.3{\small ±0.9} & 79.9                   \\
                                & \textit{trans} & 80.4{\small ±1.4} & 70.4{\small ±1.0} & 85.6{\small ±0.3} & 88.2{\small ±0.9} & 93.8{\small ±0.4} & 93.9{\small ±0.4} & 95.8{\small ±0.1} & 79.4{\small ±0.9} & 46.4{\small ±0.3} & 70.6{\small ±0.7} & 80.4                   \\
      \multirow{-3}{*}{VQGraph} & \textit{ind}   & 70.4{\small ±3.4} & 70.0{\small ±1.6} & 84.5{\small ±1.5} & 84.3{\small ±1.1} & 91.5{\small ±1.8} & 93.0{\small ±0.6} & 95.7{\small ±0.3} & 77.5{\small ±1.7} & 46.3{\small ±0.9} & 64.0{\small ±1.7} & 77.7                   \\ \midrule
                                & \textit{prod}  & 77.8{\small ±1.9} & 68.6{\small ±1.4} & 83.8{\small ±0.5} & 86.6{\small ±1.2} & 92.5{\small ±0.7} & 93.5{\small ±0.4} & 95.8{\small ±0.1} & 78.4{\small ±0.7} & 46.1{\small ±0.6} & 69.1{\small ±0.8} & 79.2                   \\
                                & \textit{trans} & 80.3{\small ±1.7} & 69.0{\small ±1.2} & 85.4{\small ±0.4} & 88.3{\small ±1.1} & 93.9{\small ±0.5} & 93.7{\small ±0.4} & 95.9{\small ±0.1} & 80.4{\small ±0.6} & 46.2{\small ±0.5} & 70.5{\small ±0.8} & 80.4                   \\
      \multirow{-3}{*}{NOSMOG}  & \textit{ind}   & 68.1{\small ±3.0} & 67.1{\small ±2.1} & 77.4{\small ±0.8} & 79.8{\small ±1.5} & 87.1{\small ±1.5} & 92.6{\small ±0.7} & 95.5{\small ±0.1} & 70.4{\small ±1.2} & 45.3{\small ±0.7} & 63.5{\small ±0.8} & 74.7                   \\ \midrule
                                & \textit{prod}  & 81.4{\small ±1.2} & 72.3{\small ±0.9} & 86.5{\small ±0.3} & 87.7{\small ±0.4} & 93.9{\small ±0.3} & 94.6{\small ±0.2} & 96.0{\small ±0.1} & 79.3{\small ±0.8} & 49.3{\small ±0.2} & 70.2{\small ±0.5} & 81.1                   \\
                                & \textit{trans} & 81.6{\small ±1.0} & 72.2{\small ±0.7} & 86.5{\small ±0.2} & 87.7{\small ±0.3} & 93.9{\small ±0.3} & 94.7{\small ±0.2} & 96.1{\small ±0.1} & 79.5{\small ±0.7} & 49.2{\small ±0.1} & 71.3{\small ±0.3} & 81.3                   \\
      \multirow{-3}{*}{SimMLP}  & \textit{ind}   & 80.5{\small ±2.2} & 72.8{\small ±1.6} & 86.4{\small ±0.5} & 87.6{\small ±1.0} & 93.9{\small ±0.6} & 94.5{\small ±0.2} & 96.0{\small ±0.2} & 78.5{\small ±1.5} & 49.4{\small ±0.5} & 66.1{\small ±1.1} & 80.6                   \\
      \bottomrule
    \end{tabular}
  }
\end{table*}

\subsection{Full Cold Start Results}
\label{sec:cold start full}

See Table \ref{tab:cold-start full}.

\begin{table*}[!t]
  \centering
  \caption{Node classification accuracy under cold-start setting. }
  \label{tab:cold-start full}

  \resizebox{\linewidth}{!}{
    \begin{tabular}{lcccccccccc | c}
      \toprule
      Methods                            & Cora                          & Citeseer                      & Pubmed                        & Computer                      & Photo                         & Co-CS                         & Co-Phys                       & Wiki-CS                       & Flickr                        & OGB-Arxiv                     & \textbf{\textit{Avg.}} \\ \midrule\midrule
      SAGE \citep{hamilton2017inductive} & 69.7{\small ±2.9}             & \underline{67.1{\small ±2.6}} & 82.9{\small ±1.0}             & 84.5{\small ±0.9}             & 91.2{\small ±0.6}             & \underline{91.9{\small ±0.7}} & \underline{95.6{\small ±0.1}} & \underline{76.3{\small ±1.6}} & 43.3{\small ±1.1}             & 55.5{\small ±0.8}             & 75.8                   \\
      BGRL \citep{thakoor2022largescale} & \underline{79.4{\small ±1.7}} & 65.0{\small ±2.2}             & \underline{84.0{\small ±1.0}} & \textbf{87.6{\small ±0.8}}    & \underline{91.5{\small ±1.1}} & 91.1{\small ±0.5}             & 94.3{\small ±0.5}             & 76.0{\small ±1.6}             & \underline{49.3{\small ±0.6}} & \underline{65.0{\small ±0.5}} & \underline{78.3}       \\ \midrule\midrule
      MLP \citep{zhang2022graphless}     & 64.2{\small ±2.1}             & 64.4{\small ±1.8}             & 80.9{\small ±0.7}             & 80.8{\small ±0.9}             & 87.9{\small ±1.0}             & 91.8{\small ±0.8}             & 95.2{\small ±0.2}             & 74.9{\small ±1.8}             & 46.1{\small ±0.5}             & 55.9{\small ±0.7}             & 74.2                   \\
      GLNN \citep{zhang2022graphless}    & \underline{72.0{\small ±1.7}} & 69.1{\small ±2.6}             & 84.4{\small ±0.9}             & 84.0{\small ±0.7}             & 91.1{\small ±0.5}             & \underline{93.3{\small ±0.5}} & \underline{95.7{\small ±0.1}} & \underline{77.6{\small ±1.4}} & 46.1{\small ±0.4}             & 60.6{\small ±0.6}             & 77.4                   \\
      GENN \citep{wang2023graph}         & 68.1{\small ±2.2}             & 65.1{\small ±2.8}             & 78.4{\small ±0.8}             & 79.1{\small ±1.8}             & 87.1{\small ±2.4}             & 92.7{\small ±0.5}             & 95.2{\small ±0.1}             & 70.1{\small ±1.7}             & 45.1{\small ±0.7}             & 62.6{\small ±0.7}             & 74.3                   \\
      VQGraph \citep{yang2023vqgraph}    & 70.4{\small ±3.4}             & \underline{70.0{\small ±1.6}} & \underline{84.5{\small ±1.5}} & \underline{84.3{\small ±1.1}} & \underline{91.5{\small ±1.8}} & 93.0{\small ±0.6}             & 95.7{\small ±0.3}             & 77.5{\small ±1.7}             & \underline{46.3{\small ±0.9}} & \underline{64.0{\small ±1.7}} & \underline{77.7}       \\
      NOSMOG \citep{tian2022learning}    & 68.1{\small ±3.0}             & 67.1{\small ±2.1}             & 77.4{\small ±0.8}             & 79.8{\small ±1.5}             & 87.1{\small ±1.5}             & 92.6{\small ±0.7}             & 95.5{\small ±0.1}             & 70.4{\small ±1.2}             & 45.3{\small ±0.7}             & 63.5{\small ±0.8}             & 74.7                   \\ \midrule\midrule
      \rowcolor{Gray} \textbf{SimMLP}    & \textbf{80.5{\small ±2.2}}    & \textbf{72.8{\small ±1.6}}    & \textbf{86.4{\small ±0.5}}    & \underline{87.6{\small ±1.0}} & \textbf{93.9{\small ±0.6}}    & \textbf{94.5{\small ±0.2}}    & \textbf{96.0{\small ±0.2}}    & \textbf{78.5{\small ±1.5}}    & \textbf{49.4{\small ±0.5}}    & \textbf{66.1{\small ±1.1}}    & \textbf{80.6}          \\
      \bottomrule
    \end{tabular}
  }
\end{table*}

\subsection{Full Inductive Bias Analysis}
\label{sec:ind bias full}

See Table \ref{tab:inductive bias full}

\begin{table*}[!t]
  \centering
  \caption{SimMLP shares two inductive biases with GNNs, i.e., homophily and local structure importance, which are measured by \textbf{smoothness} and \textbf{min-cut}, respectively.
  }
  \label{tab:inductive bias full}

  \resizebox{\linewidth}{!}{
    \begin{tabular}{l  cccccc cccccc}
      \toprule
      ~                                  & \multicolumn{6}{c}{\textbf{Homophily} (Smoothness$\downarrow$)} & \multicolumn{6}{c}{Local Structure Importance (Min-Cut$\uparrow$)}                                                                                                                                                                                           \\ \cmidrule(lr){2-7}\cmidrule(lr){8-13}
      Methods                            & Cora                                                            & Citeseer                                                           & Pubmed         & Computer       & Photo          & \textbf{\textit{Avg.}} & Cora           & Citeseer       & Pubmed         & Computer       & Photo          & \textbf{\textit{Avg.}} \\ \midrule\midrule
      Raw Node Feature                   & 0.822                                                           & 0.783                                                              & 0.734          & 0.539          & 0.540          & 0.684                  & $-$            & $-$            & $-$            & $-$            & $-$            & $-$                    \\ \midrule\midrule
      SAGE \citep{hamilton2017inductive} & 0.113                                                           & 0.184                                                              & 0.143          & 0.156          & 0.109          & 0.141                  & 0.924          & 0.943          & 0.918          & 0.854          & 0.872          & 0.902                  \\
      BGRL \citep{thakoor2022largescale} & 0.155                                                           & 0.102                                                              & 0.333          & 0.251          & 0.203          & 0.209                  & 0.885          & 0.935          & 0.856          & 0.834          & 0.849          & 0.872                  \\ \midrule\midrule
      MLP \citep{zhang2022graphless}     & 0.463                                                           & 0.444                                                              & 0.485          & 0.456          & 0.432          & 0.456                  & 0.666          & 0.804          & 0.863          & 0.718          & 0.747          & 0.759                  \\
      GLNN \citep{zhang2022graphless}    & 0.282                                                           & 0.268                                                              & 0.421          & 0.355          & 0.398          & 0.345                  & 0.886          & 0.916          & 0.793          & 0.804          & 0.811          & 0.842                  \\
      NOSMOG \citep{tian2022learning}    & 0.267                                                           & 0.230                                                              & 0.394          & 0.306          & \textbf{0.277} & 0.295                  & 0.902          & 0.932          & 0.834          & 0.838          & 0.823          & 0.866                  \\
      VQGraph \citep{yang2023vqgraph}    & 0.253                                                           & 0.212                                                              & 0.396          & 0.328          & 0.310          & 0.300                  & 0.914          & 0.940          & 0.831          & 0.858          & 0.836          & 0.876                  \\  \midrule\midrule
      \rowcolor{Gray}\textbf{SimMLP}     & \textbf{0.196}                                                  & \textbf{0.170}                                                     & \textbf{0.360} & \textbf{0.299} & 0.288          & \textbf{0.263}         & \textbf{0.934} & \textbf{0.958} & \textbf{0.886} & \textbf{0.901} & \textbf{0.860} & \textbf{0.908}         \\ \bottomrule
    \end{tabular}
  }
\end{table*}

\section{Training Efficiency}
\label{sec:efficiency}

Table \ref{tab:runing time and memory} presents a comparison of the running time and memory usage between SimMLP and other baselines, namely GAT \citep{velickovic2018graph}, GRACE \citep{zhu2020deep}, and BGRL \citep{thakoor2022largescale}. Apart from the significant inference acceleration, SimMLP has less training time and memory usage. In particular, GAT with 4 attention heads imposes a substantial computational consumption in model training. This is highly probable to be the consumption in learning attention scores. GRACE utilizes InfoNCE loss to align the consistency between two graph views, where the similarity measurements might lead to significant computational overhead. Compared to this method, SimMLP demonstrates improvements in terms of memory usage ($3.8\sim6.8\times$) and training time ($4.8\sim8.3\times$). BGRL employs bootstrap \citep{grill2020bootstrap} to alleviate the need for negative samples in InfoNCE, thus alleviating significant computational usage in measuring the distance between negative pairs. However, SimMLP remains more efficient than BGRL due to the use of MLP encoder.

\begin{table*}[!h]
  \centering
  \caption{Computational requirements of different baseline methods on a set of standard benchmark graphs. The experiments are performed on a 24GB Nvidia GeForce RTX 3090.}
  \resizebox{\linewidth}{!}{
    \begin{tabular}{lcccccccccc}
      \toprule
                                     & \multicolumn{2}{c}{Computer} & \multicolumn{2}{c}{Photo} & \multicolumn{2}{c}{Coauthor-CS} & \multicolumn{2}{c}{Coauthor-Phys} & \multicolumn{2}{c}{Wiki-CS}                                                                                                    \\ \cmidrule(lr){2-3}\cmidrule(lr){4-5}\cmidrule(lr){6-7}\cmidrule(lr){8-9}\cmidrule(lr){10-11}
      Methods                        & Memory                       & Training Time             & Memory                          & Training Time                     & Memory                      & Training Time     & Memory           & Training Time      & Memory           & Training Time     \\ \midrule\midrule
      GAT                            & 5239 MB                      & 73.8 (s)                  & 2571 MB                         & 41.9 (s)                          & 2539 MB                     & 60.4 (s)          & 13199 MB         & 265.2 (s)          & 4568 MB          & 74.4 (s)          \\
      GRACE                          & 8142 MB                      & 349.5 (s)                 & 2755 MB                         & 138.4 (s)                         & 11643 MB                    & 261.4 (s)         & 16294 MB         & 573.2 (s)          & 5966 MB          & 290.9 (s)         \\
      BGRL                           & 2196 MB                      & 96.8 (s)                  & 1088 MB                         & 64.1 (s)                          & 2513 MB                     & 129.9 (s)         & 5556 MB          & 273.8 (s)          & 1899 MB          & 108.8 (s)         \\
      \rowcolor{Gray}\textbf{SimMLP} & \textbf{1969 MB}             & \textbf{53.4 (s)}         & \textbf{694 MB}                 & \textbf{27.0 (s)}                 & \textbf{1716 MB}            & \textbf{54.8 (s)} & \textbf{3920 MB} & \textbf{110.7 (s)} & \textbf{1590 MB} & \textbf{35.5 (s)} \\ \bottomrule
    \end{tabular}
  }
  \label{tab:runing time and memory}
\end{table*}

Our SimMLP can scale to very large graphs via mini-batch training. We report the time and memory consumption on OGB-Product dataset in Table \ref{tab:complexity ogb-product}. We compare our SimMLP against BGRL, an efficient SSL method on graphs. The results demonstrate the scalability and practicality of SimMLP in handling real-world applications.

\begin{table}[!h]
  \centering
  \caption{The time and memory consumption on large-scaled OGB-product dataset.}
  \label{tab:complexity ogb-product}
  \resizebox{\linewidth}{!}{
    \begin{tabular}{lcc}
      \toprule
      \textbf{Method}                & \textbf{Training Time (Per Epoch)} & \textbf{Memory Consumption} \\ \midrule
      BGRL                           & 263s                               & 17,394MB                    \\
      \rowcolor{Gray}\textbf{SimMLP} & \textbf{158s}                      & \textbf{11,993MB}           \\
      \bottomrule
    \end{tabular}
  }
\end{table}

\begin{table}[!h]
  \centering
  \caption{Comparison between MLP-based methods in training the MLP for downstream node classification (5000 epochs). }
  \label{tab:time mlp}
  \resizebox{\linewidth}{!}{
    \begin{tabular}{lccccc}
      \toprule
                                      & Cora & Citeseer & Pubmed & Flickr & Arxiv \\ \midrule\midrule
      GLNN                            & 1.6s & 1.9s     & 2.0s   & 2.5s   & 3.3s  \\
      VQGraph                         & 1.9s & 2.3s     & 2.7s   & 3.2s   & 4.5s  \\
      NOSMOG                          & 2.3s & 2.5s     & 2.7s   & 3.6s   & 4.7s  \\ \midrule\midrule
      \rowcolor{Gray} \textbf{SimMLP} & 1.6s & 1.9s     & 1.9s   & 2.5s   & 3.2s  \\
      \bottomrule
    \end{tabular}
  }
\end{table}

\section{Comprehensive Ablation Study}
\label{sec:ablation}

\subsection{The necessity in incorporating structural information}

The use of GNN encoder in learning structure-aware knowledge is essential in SimMLP, as it directly aligns the embeddings of two encoders. Without the GNN encoder, the model will fail to capture the fine-grained and generalizable correlation between node features and graph structures, demonstrated in Table \ref{tab:ablation gnn}.

\begin{table*}[!h]
  \centering
  \caption{The ablation study on incorporating structural information using GNNs. Without the GNN encoder (i.e., only using MLPs), the model performance will be significantly decreased.}
  \resizebox{\linewidth}{!}{
    \begin{tabular}{lcccccccccc}
      \toprule
      Methods          & Cora                        & Citeseer                   & Pubmed                      & Computer                    & Photo                       & Co-CS                       & Co-Phys                     & Wiki-CS                     & Flickr                      & Arxiv                       \\ \midrule\midrule
      \textbf{SimMLP}  & \textbf{84.60{\tiny ±0.24}} & \textbf{73.52{\tiny±0.53}} & \textbf{86.99{\tiny ±0.09}} & \textbf{88.46{\tiny ±0.16}} & \textbf{94.28{\tiny ±0.08}} & \textbf{94.87{\tiny ±0.07}} & \textbf{96.17{\tiny ±0.03}} & \textbf{81.21{\tiny ±0.13}} & \textbf{49.85{\tiny ±0.09}} & \textbf{71.12{\tiny ±0.10}} \\
      \textbf{w/o GNN} & 55.91{\tiny ±0.66}          & 57.36{\tiny ±0.33}         & 79.93{\tiny ±0.32}          & 72.76{\tiny ±0.71}          & 77.05{\tiny ±0.18}          & 91.19{\tiny ±0.13}          & 93.35{\tiny ±0.12}          & 73.87{\tiny ±0.26}          & 45.82{\tiny ±0.07}          & 54.83{\tiny ±0.41}          \\ \bottomrule
    \end{tabular}
  }
  \label{tab:ablation gnn}
\end{table*}


\subsection{The design choice of Strategy 1}
\label{sec:aggregation type}

In this section, we analyze some design choices of Strategy 1, i.e., using MLP to approximate GNN. The learning process is similar to SGC \citep{wu2019simplifying} or APPNP \citep{gasteiger2018combining} that decompose feature transformation and message passing. In SimMLP, we consider using normalized Laplacian matrix to direct the message passing due to its simplicity. Based on the normalization, we have three design choices. We dub them as (1) \textbf{Col}: using column-normalized Laplacian matrix $\tilde{\mD}^{-1}\tilde{\mA}$ for message passing, (2) \textbf{Row}: using row-normalized Laplacian matrix $\tilde{\mA}\tilde{\mD}^{-1}$ for message passing, and (3) \textbf{Bi}: using bi-normalized Laplacian matrix $\tilde{\mD}^{-1/2}\tilde{\mA}\tilde{\mD}^{-1/2}$ for message passing. Here $\tilde{\mA} = \mA + \mI$ and $\tilde{\mD}$ is the diagonal matrix of node degrees of $\tilde{\mA}$. We present the results of these three choices on ten benchmark datasets, shown in Table \ref{sec:aggregation type}.

\begin{table*}[!h]
  \centering
  \caption{Ablation study on node aggregation choices. {\tt Col} indicates column-normalized Laplacian aggregation matrix $\tilde{\mD}^{-1}\tilde{\mA}$, {\tt Row} indicates row-normalized Laplacian aggregation matrix $\tilde{\mA}\tilde{\mD}^{-1}$, and {\tt Bi.} indicates bi-normalized Laplacian aggregation matrix $\tilde{\mD}^{-1/2}\tilde{\mA}\tilde{\mD}^{-1/2}$. SimMLP employs {\tt Bi.} since it consistently outperforms others even though the improvements may not be significant. }
  \resizebox{\linewidth}{!}{
    \begin{tabular}{lcccccccccc}
      \toprule
                      & Cora                        & Citeseer                    & Pubmed                      & Computer                    & Photo                       & Co-CS                       & Co-Phys                     & Wiki-CS                     & Flickr                      & Arxiv                       \\ \midrule\midrule
      \textbf{{ Bi.}} & \textbf{84.60{\tiny ±0.24}} & \textbf{73.52{\tiny ±0.53}} & \textbf{86.99{\tiny ±0.09}} & \textbf{88.46{\tiny ±0.16}} & \textbf{94.28{\tiny ±0.08}} & \textbf{94.87{\tiny ±0.07}} & \textbf{96.17{\tiny ±0.03}} & \textbf{81.21{\tiny ±0.13}} & \textbf{49.85{\tiny ±0.09}} & \textbf{71.12{\tiny ±0.10}} \\
      { Col}          & 84.14{\tiny ±0.34}          & 73.48{\tiny ±0.53}          & 86.92{\tiny ±0.08}          & 87.93{\tiny ±0.27}          & 93.11{\tiny ±0.15}          & 94.81{\tiny ±0.06}          & 96.09{\tiny ±0.03}          & 80.62{\tiny ±0.30}          & 49.15{\tiny ±0.16}          & 71.03{\tiny ±0.09}          \\
      { Row}          & 84.09{\tiny ±0.32}          & 73.49{\tiny ±0.54}          & 86.92{\tiny ±0.08}          & 87.96{\tiny ±0.27}          & 93.07{\tiny ±0.15}          & 94.82{\tiny ±0.06}          & 96.07{\tiny ±0.04}          & 80.63{\tiny ±0.25}          & 49.18{\tiny ±0.10}          & 71.04{\tiny ±0.09}          \\ \bottomrule
    \end{tabular}
  }
  \label{tab:aggregation type}
\end{table*}

In this table, we observe that there is no significant difference in performance among the various aggregation methods. All of these methods can achieve desirable performance. Nevertheless, the bi-normalized aggregation (\textbf{Bi.}) consistently outperforms the others. Actually, we can directly use the message passing functions of SGC or APPNP. For SGC, we do not observe significant performance differences compared to the discussed three choices. For APPNP that performs message passing based on page-rank, we consider obtaining the page-rank aggregation matrix would lead to significant time consumption, especially with graph structural augmentations. We leave this in the future work.

\begin{figure*}[!ht]
  \centering
  \subfloat{\includegraphics[width=0.20\linewidth]{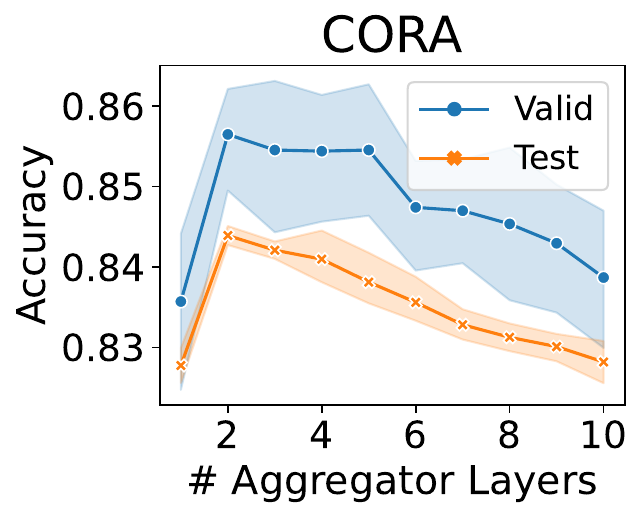}}
  \subfloat{\includegraphics[width=0.20\linewidth]{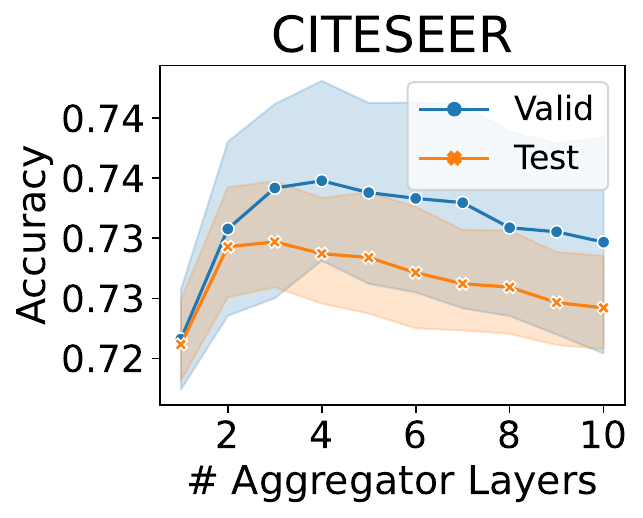}}
  \subfloat{\includegraphics[width=0.20\linewidth]{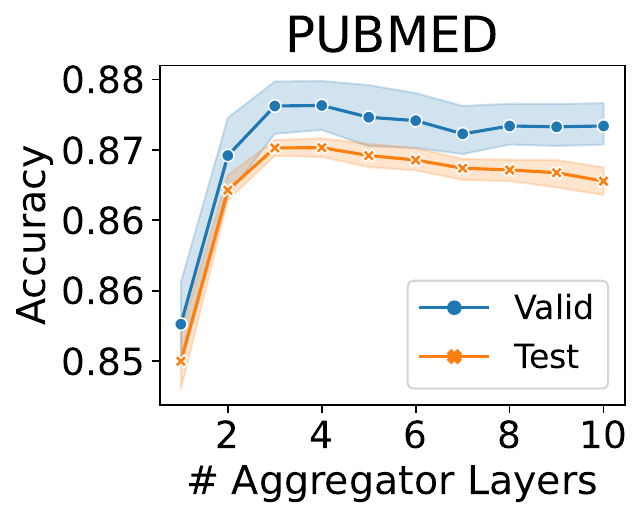}}
  \subfloat{\includegraphics[width=0.20\linewidth]{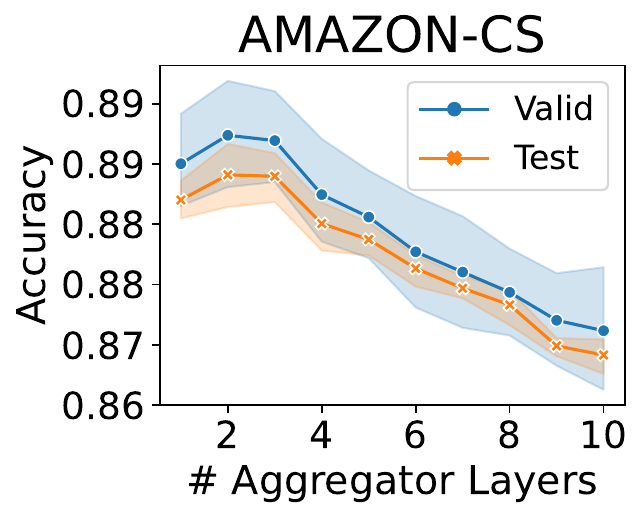}}
  \subfloat{\includegraphics[width=0.20\linewidth]{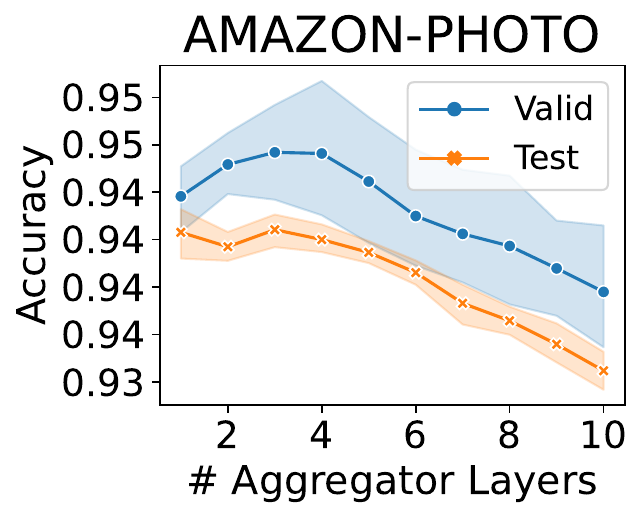}}
  \caption{Node classification accuracy on transductive setting with different aggregation layers. }
  \label{fig:aggregation layer}
\end{figure*}

Apart from the choice of message passing methods, determining the message passing layers is also important. We show the performance of SimMLP with varying numbers of message passing layers on five benchmark datasets in Figure \ref{fig:aggregation layer}. We observe the optimal performance is achieved with 2 or 3 layers, which is consistent with prior research on GNNs \citep{li2019deepgcns}. It might be because a high number of message passing layers can result in over-smoothing.

\subsection{How does Strategy 2 (augmentation) prevent trivial solutions?}
\label{sec:augment}

Additionally, we conduct a detailed analysis of how augmentations impact model performance. Figure \ref{fig:augment} illustrates the model performance at different augmentation probabilities on Cora, Citeseer, Pubmed, Computer, and Photo datasets under the transductive setting. The augmentation ratio is searched among \{0.0, 0.25, 0.5, 0.75\}. These figures enable us to gain insight into the specific effects of augmentations on model performance.

\begin{figure*}[!h]
  \centering
  \subfloat{\includegraphics[width=0.20\linewidth]{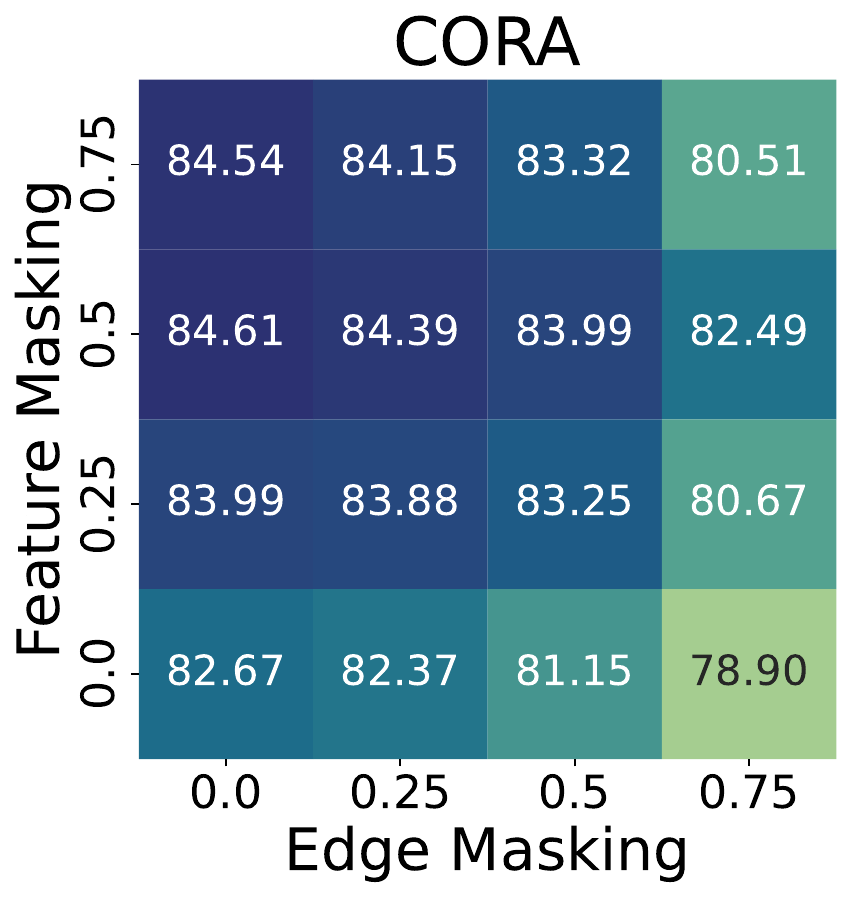}}
  \subfloat{\includegraphics[width=0.20\linewidth]{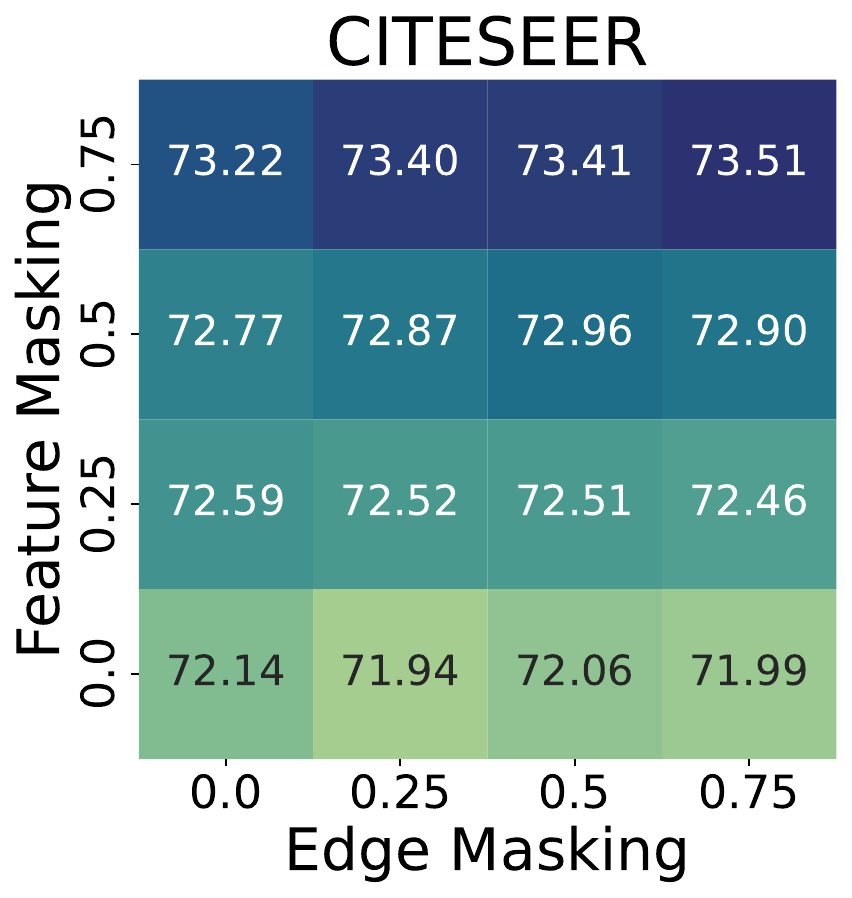}}
  \subfloat{\includegraphics[width=0.20\linewidth]{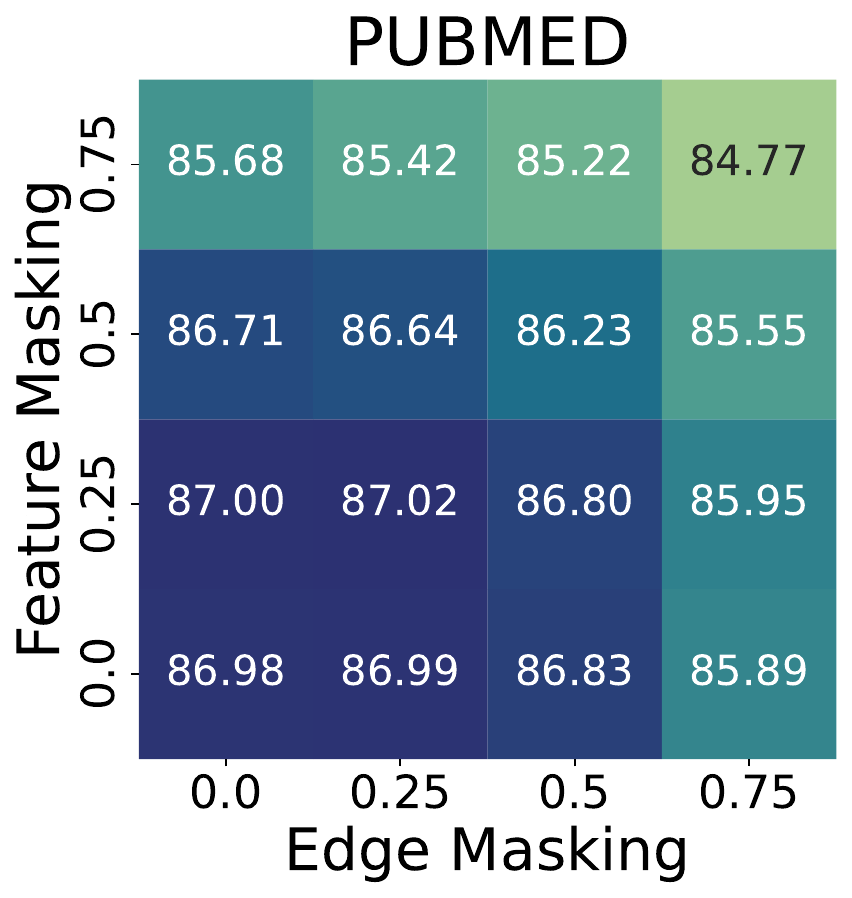}}
  \subfloat{\includegraphics[width=0.20\linewidth]{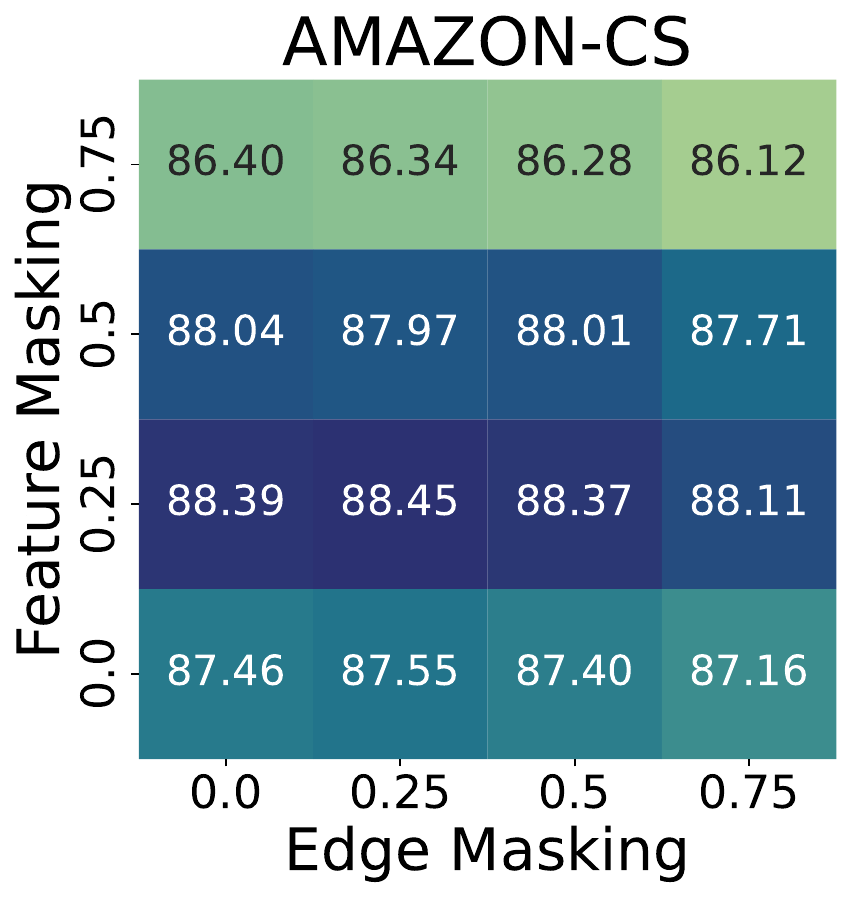}}
  \subfloat{\includegraphics[width=0.20\linewidth]{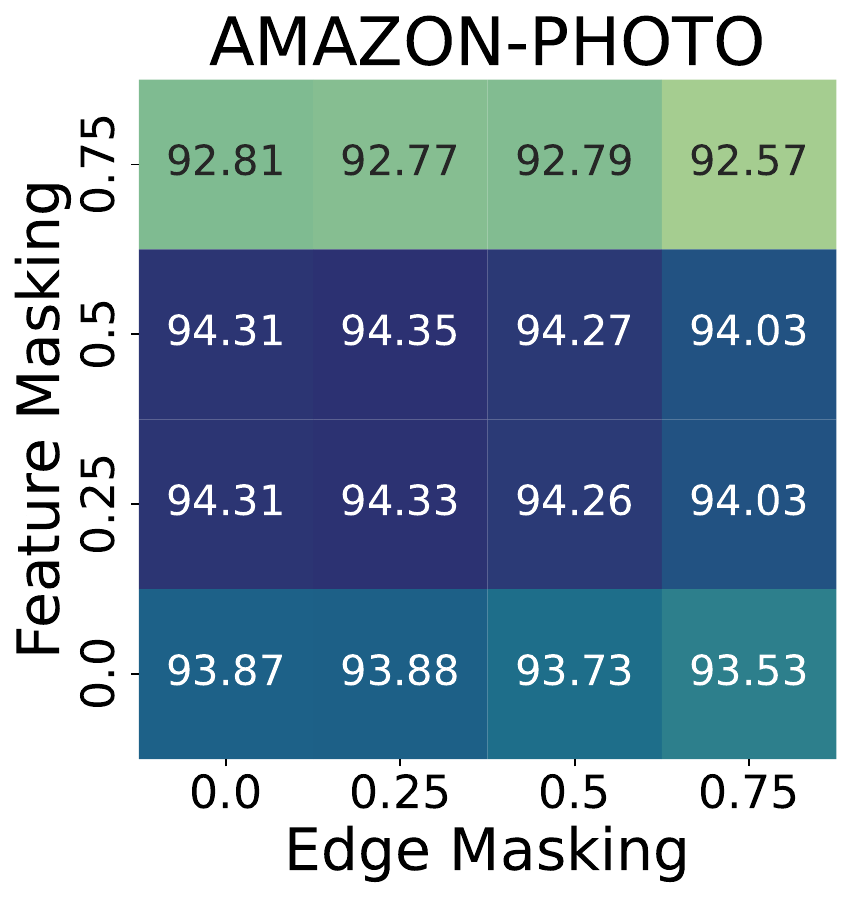}}
  \caption{Node classification accuracy on transductive setting with different augmentation ratios. }
  \label{fig:augment}
\end{figure*}

\subsection{How the reconstruction term in Equation \ref{eq:objective} works?}
\label{sec:recon term}

In this section, we evaluate the role of the reconstruction term of SimMLP in Equation \ref{eq:objective}. We treat the term serves as a regularizer that mitigates the potential distribution shifts. It works like positional embedding \citep{dwivedi2022graph} that preserves more localized information on GNN embeddings. We show the impact of the reconstruction term on model performance in Table \ref{tab:recon}. Our observations indicate the reconstruction term might be important in large-scale datasets, e.g., Arxiv. It might be because these datasets contain more noise.

\begin{table*}[!ht]
  \centering
  \caption{Reconstruction term in Equation \ref{eq:objective} serves as a regularizer, preventing the potential distribution shifts.}
  \resizebox{\linewidth}{!}{
    \begin{tabular}{lcccccccccc}
      \toprule
      Methods         & Cora                        & Citeseer                   & Pubmed                      & Computer                    & Photo                       & Co-CS                       & Co-Phys                     & Wiki-CS                     & Flickr                      & Arxiv                       \\ \midrule
      \textbf{SimMLP} & \textbf{84.60{\tiny ±0.24}} & \textbf{73.52{\tiny±0.53}} & \textbf{86.99{\tiny ±0.09}} & \textbf{88.46{\tiny ±0.16}} & \textbf{94.28{\tiny ±0.08}} & \textbf{94.87{\tiny ±0.07}} & \textbf{96.17{\tiny ±0.03}} & \textbf{81.21{\tiny ±0.13}} & \textbf{49.85{\tiny ±0.09}} & \textbf{71.12{\tiny ±0.10}} \\
      {w/o Rec.}      & 84.37{\tiny ±0.27}          & 73.18{\tiny ±0.24}         & 86.86{\tiny ±0.10}          & 88.25{\tiny ±0.07}          & 94.15{\tiny ±0.07}          & 94.64{\tiny ±0.06}          & 96.01{\tiny ±0.07}          & 81.10{\tiny ±0.13}          & 49.60{\tiny ±0.11}          & 70.38{\tiny ±0.22}          \\ \bottomrule
    \end{tabular}
  }
  \label{tab:recon}
\end{table*}

\subsection{Different GNN Architectures}

It is straightforward to adapt our approach from GCN to other GNN architectures. In Table \ref{tab:gnn backbone}, we show results of GCN, SAGE, and APPNP on Cora, Citeseer, and Pubmed. SimMLP consistently enhances performance across GNN architectures, owing to the capability of SSL to capture generalizable patterns. This underscores SimMLP's adaptability and effectiveness across a wide range of tasks and architectures.

\begin{table*}[!h]
  \centering
  \caption{Model performance with different GNN backbones.}
  \label{tab:gnn backbone}
  \resizebox{0.9\linewidth}{!}{
    \begin{tabular}{lcccccc}
      \toprule
               & GCN  & \textbf{GCN + SimMLP} & SAGE & \textbf{SAGE + SimMLP} & APPNP & \textbf{APPNP + SimMLP} \\ \midrule
      Cora     & 82.1 & \textbf{84.6}         & 81.4 & \textbf{84.1}          & 81.4  & \textbf{84.3}           \\
      Citeseer & 70.7 & \textbf{73.5}         & 70.4 & \textbf{73.5}          & 70.3  & \textbf{73.6}           \\
      Pubmed   & 85.6 & \textbf{87.0}         & 85.9 & \textbf{86.9}          & 85.7  & \textbf{86.8}           \\
      \bottomrule
    \end{tabular}
  }
\end{table*}

\end{document}